\documentclass{article} 
\usepackage{iclr2024_conference,times}


\usepackage{amsmath,amsfonts,bm}









\def\eqref#1{equation~\ref{#1}}









\def\1{\bm{1}}










\DeclareMathAlphabet{\mathsfit}{\encodingdefault}{\sfdefault}{m}{sl}
\SetMathAlphabet{\mathsfit}{bold}{\encodingdefault}{\sfdefault}{bx}{n}













\usepackage{hyperref}
\usepackage{url}
\usepackage{microtype}      
\usepackage{xcolor}         
\usepackage{amsmath}
\usepackage{mathtools}
\usepackage{bbm, dsfont}
\usepackage{amsthm}
\usepackage{amssymb}
\usepackage{enumitem}

\usepackage{graphicx}
\usepackage{algorithm}
\usepackage[noend]{algpseudocode}
\usepackage{subcaption}
\newtheorem{theorem}{Theorem}[section]

\newtheorem{lemma}[theorem]{Lemma}

\newtheorem{assumption}[theorem]{Assumption}

\usepackage[textsize=tiny]{todonotes}

\renewcommand{\epsilon}{\varepsilon}
\newcommand{\ep}{\varepsilon}

\definecolor{ed}{RGB}{225,0,100}


\title{PAC Prediction Sets Under Label Shift}


\usepackage{authblk}

\author[1]{Wenwen Si}
\author[2]{Sangdon Park}
\author[1]{Insup Lee}
\author[3]{Edgar Dobriban}
\author[1]{Osbert Bastani \footnote{Author e-mail addresses: \{wenwens, lee, obastani\}@seas.upenn.edu, sangdon@postech.ac.kr, dobriban@wharton.upenn.edu} .}
\affil[1]{Department of Computer \& Information Science, University of Pennsylvania}
\affil[2]{Graduate School of AI, POSTECH}
\affil[3]{Department of Statistics and Data Science, University of Pennsylvania}

%

\begin{document}

\maketitle

\begin{abstract}
Prediction sets capture uncertainty by predicting sets of labels rather than individual labels, enabling downstream decisions to conservatively account for all plausible outcomes. Conformal inference algorithms construct prediction sets guaranteed to contain the true label with high probability. These guarantees fail to hold in the face of distribution shift, which is precisely when reliable uncertainty quantification can be most useful. We propose a novel algorithm for constructing prediction sets with PAC guarantees in the label shift setting. 
This method estimates the predicted probabilities of the classes in a target domain, as well as the confusion matrix, 
then propagates uncertainty in these estimates through a Gaussian elimination algorithm to compute confidence intervals for importance weights.
Finally, it uses these intervals to construct prediction sets.
We evaluate our approach on five datasets:
the CIFAR-10, ChestX-Ray and Entity-13 image datasets, 
the tabular CDC Heart Dataset, 
and the AGNews text dataset. Our algorithm satisfies the PAC guarantee while producing smaller, more informative, prediction sets compared to several baselines.
\end{abstract}

\section{Introduction}

Uncertainty quantification can be a critical tool for building reliable systems from machine learning components. For example, a medical decision support system can convey uncertainty to a doctor, or a robot can act conservatively with respect to uncertainty. These approaches are particularly important when the data distribution shifts as the predictive system is deployed, since they enable the decision-maker to react to degraded performance.

Conformal prediction~\citep{vovk2005algorithmic,angelopoulos2021gentle} is a promising approach to uncertainty quantification,
converting machine learning models into \emph{prediction sets}, which output sets of labels instead of a single label. Under standard assumptions (i.i.d. or exchangeable data), it guarantees that the prediction set contains the true label with high probability. We consider \emph{probably approximately correct (PAC)} (or \emph{training-conditional}) guarantees~\citep{vovk2012conditional,park2019pac}, which ensure high probability coverage over calibration datasets used to construct the prediction sets.

In this paper, we propose a novel prediction set algorithm that provides PAC guarantees under the \emph{label shift} setting, 
where the distribution of the labels may shift, but the distribution of covariates conditioned on the labels remains fixed. For instance, during a pandemic, a disease may spread to a much larger fraction of the population, but the 
manifestations of the disease may remain the same. As another example, real-world data may have imbalanced classes, unlike the balanced classes typical of curated training datasets. We consider the unsupervised domain adaptation setting~\citep{ben2006analysis}, where we are given labeled examples from a \emph{source domain}, but only unlabeled examples from the \emph{target domain}, and care about performance in the target domain. 

A standard way to adapt conformal inference to handle distribution shift is by using importance weighting to ``convert'' data from the source distribution into data from the target distribution~\citep{tibshirani2019conformal,park2021pac}. 
In the label shift setting, one can express the importance weights as $w^*=\mathbf{C}_P^{-1}q^*$, where $\mathbf{C}_P$ is the \emph{confusion matrix} and $q^*$ is the \emph{distribution of predicted labels}~\citep{lipton2018detecting}; see details below. 
However, since $\mathbf{C}_P$ and $q^*$ are unknown and must be estimated based on a finite dataset,
this introduces additional errors that are not accounted for by existing weighting methods.

To address this problem, we construct confidence intervals around $\mathbf{C}_P$ and $q^*$, and then devise a novel algorithm to propagate these intervals through a Gaussian elimination algorithm used to compute $w^*$. 
We can then leverage an existing strategy for constructing PAC prediction sets when given confidence intervals for the importance weights~\citep{park2021pac}.

\begin{figure}[t]
\centering
\includegraphics[width=0.98\linewidth]{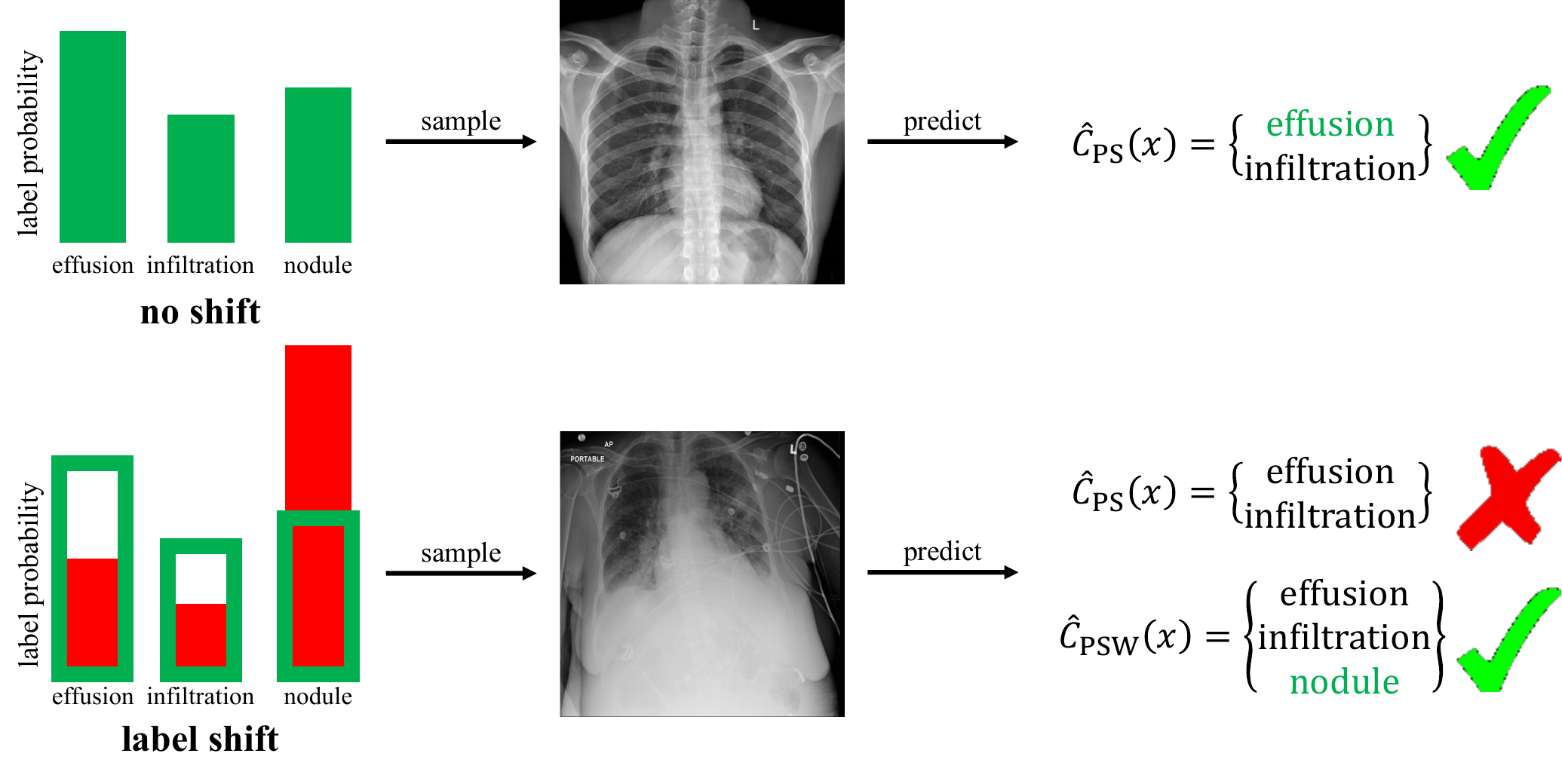}
\caption{An example of our approach on the ChestX-ray dataset. In the unshifted setting, standard PAC prediction sets guarantee high-probability coverage, but this guarantee fails under label shift. Our approach addresses this challenge and remains valid in the shifted environment.}
\vspace{-2em}
\label{fig:example}
\end{figure}

We empirically evaluate our approach on five datasets across three application domains: CIFAR-10~\citep{krizhevsky2009learning} and Entity-13~\citep{santurkar2020breeds} in the computer vision domain, the CDC Heart Dataset~\citep{cdcheart} and ChestX-ray~\citep{national2022nih} in the medical domain, and AGNews~\citep{zhang2015character} in the language domain. 

\textbf{Contributions.}
We propose a novel algorithm for constructing PAC prediction sets in the presence of label shift, which computes provably valid intervals around the true importance weights. 
Our algorithm is based on a novel technique for propagating confidence intervals through the updates of Gaussian elimination, which to the best of our knowledge is a novel approach to uncertainty propagation in a prediction set construction setting. This idea may be of independent interest and applicable to other linear algebraic computations.
Finally, empirically demonstrate that our approach satisfies the PAC guarantee while constructing smaller prediction sets than several baselines. 

\textbf{Example.} Figure~\ref{fig:example} illustrates an application of our technique to the ChestX-ray dataset. In medical settings, PAC prediction sets (denoted PS) provide a rigorous way to quantify uncertainty for making downstream decisions; in particular, they guarantee that the prediction set contains the true label (in this case, a diagnosis) with high probability. However, label shift happens commonly in medical settings, for instance, many illnesses have varying rates of incidence over time even when the patient population remains the same. Unfortunately, label shift can invalidate the PAC coverage guarantee. Our approach (denoted PSW) corrects for the label shift via importance weighting; it does so in a provably correct way by propagating uncertainty through the Gaussian elimination computation used to construct importance weights. The resulting prediction sets satisfy the PAC guarantee.

\textbf{Related work.} There has been recent interest in conformal inference under distribution shift, much of it focusing on covariate shift~\citep{tibshirani2019conformal, lei2021conformal,qiu2022distribution}. \citep{podkopaev2021distribution} develop methods for marginal coverage under label shift, 
whereas we are interested in training-set conditional---or PAC---guarantees. Furthermore, they assume that the true importance weights are known, which is rarely the case.
In the label shift setting, the importance weights can be estimated~\citep{lipton2018detecting}, but as we show in our experiments, uncertainty in these estimates must be handled for the PAC guarantee to hold. 

We leverage the method of~\citep{park2021pac} to handle estimation error in the importance weights.
That work studies covariate shift, and uses a heuristic to obtain intervals around the importance weights. 
For the label shift setting, we can in fact obtain stronger guarantees: we modify Gaussian elimination to propagate uncertainty through the computation of the weights $w^*=\mathbf{C}_P^{-1}q^*$.
We give a more comprehensive discussion of related work in Appendix~\ref{sec:related}. Our code is  available at \url{https://github.com/averysi224/pac-ps-label-shift} for reproduction of our experiments.

\section{Problem Formulation}


\subsection{Background on Label Shift}\label{sec:labelshift}

Consider the goal of training a classifier $g:\mathcal{X}\to\mathcal{Y}$, where $\mathcal{X}\subseteq\mathbb{R}^d$ is the covariate space, and $\mathcal{Y}=[K]=\{1,...,K\}$ is the set of labels. We consider the setting where we train on one distribution $P$ over $\mathcal{X}\times\mathcal{Y}$---called the \emph{source}---with a probability density function (PDF) $p:(x,y)\mapsto p(x,y)$, and evaluate on a potentially different test distribution $Q$---called the \emph{target}---with PDF $q:(x,y)\mapsto q(x,y)$. 
We focus on the unsupervised domain adaptation setting~\citep{ben2007analysis}, where we are given an i.i.d.~sample $S_m\sim P^m$ of $m$ labeled datapoints, and  an i.i.d.~sample of $n$ unlabeled datapoints $T_X^n\sim Q_X^n$. The label shift setting~\citep{lipton2018detecting} assumes that only the label distribution $Q_Y$ may change from $P_Y$, and the conditional covariate distributions remain the same:
\begin{assumption}
\label{assump:a1}
\rm
(Label shift) We have $p(x\mid y) = q(x\mid y)$ for all $x \in \mathcal{X}, y \in \mathcal{Y}$.
\end{assumption} 
We denote $p(y) =P_Y(Y=y)$ for all $y\in \mathcal{Y}$ and analogously for $Q$.
\citep{lipton2018detecting} consider two additional mild assumptions:
\begin{assumption}
\label{assump:a2}
\rm
For all $y \in \mathcal{Y}$ such that $q(y) > 0$, we have $p(y) > 0$.
\end{assumption}
Next, given the trained classifier $g: \mathcal{X} \rightarrow \mathcal{Y}$
let $\mathbf{C}_P \in \mathbb{R}^{K\times K}$ 
denote its expected confusion matrix---i.e., $c_{ij}:=(\mathbf{C}_P)_{ij} = \mathbb{P}_{(X,Y)\sim P}(g(X)=i, Y=j)$.
\begin{assumption}
\label{assump:a3}
The confusion matrix $\mathbf{C}_P$ is invertible.
\end{assumption}
This last assumption requires that the per-class expected predictor outputs be linearly independent; for instance, it is satisfied when $g$ is reasonably accurate across all labels. In addition, one may test whether this assumption holds~\citep{lipton2018detecting}.

Denoting the importance weights $w^* \coloneqq (q(y)/p(y))_{y\in\mathcal{Y}}\in\mathbb{R}^K$, 
and
$\hat{y}:=g(x)$,
we will write $p(\hat y|y)=\mathbb{P}_{(X,Y)\sim P_X}[g(X)=\hat y|Y=y]$, and define 
$p(\hat y,y)$, $p_{\hat y}$ as well as the corresponding expressions for $q$ analogously.
Since $\hat y$
depends only on $x$, we have
$q(\hat{y}\mid y) = p(\hat{y}\mid y)$. 
Thus, see e.g., \cite{lipton2018detecting},
\begin{align*}
q_{\hat{y}} 
=\sum_{y\in \mathcal{Y}} q(\hat{y}\mid y)q(y)
=\sum_{y\in\mathcal{Y}} p(\hat{y}\mid y)q(y)
=\sum_{y\in\mathcal{Y}} p(\hat{y}, y) \frac{q(y)}{p(y)},
\end{align*}
or in a matrix form,
$q^* = \mathbf{C}_Pw^*$,
where
$q^* \coloneqq (q_{\hat{y}} )_{\hat{y}\in\mathcal{Y}}\in\mathbb{R}^K$. As we assume $\mathbf{C}_P$ is invertible,
\begin{align}
\label{eq:confq}
w^*= \mathbf{C}_P^{-1} q^*.
\end{align}
Our algorithm uses this equation to approximate $w^*$, and then use its approximation to construct PAC prediction sets that remain valid under label shift.

\subsection{PAC Prediction Sets Under Label Shift}\label{sec:pacps}

We are interested in constructing a \emph{prediction set} $C:\mathcal{X}\to2^{\mathcal{Y}}$, which outputs a set of labels $C(x)\subseteq\mathcal{Y}$ for each given input $x\in\mathcal{X}$ rather than a single label. The benefit of outputting a set of labels is that we can obtain correctness guarantees such as:
\begin{align}
\label{eqn:ac}
\mathbb{P}_{(X,Y)\sim P}[Y\in C(X)]\ge1-\epsilon,
\end{align}
where $\epsilon\in(0,1)$ is a user-provided confidence level. Then, downstream decisions can be made in a way that accounts for all labels $y\in C(x)$ rather than for a single label. Thus, prediction sets quantify uncertainty.
Intuitively, \eqref{eqn:ac} can be achieved if we output $C(x)=\mathcal{Y}$ for all $x\in\mathcal{X}$, but this is not informative. Instead, the typical goal is to output prediction sets that are as small as possible. 

The typical strategy for constructing prediction sets is to leverage a fixed existing model. In particular, we assume given a \emph{scoring function} $f:\mathcal{X}\times\mathcal{Y}\to\mathbb{R}$; most deep learning algorithms provide such scores in the form of predicted probabilities, with the corresponding classifier being $g(x)=\operatorname*{\arg\max}_{y\in\mathcal{Y}}f(x,y)$. The scores do not need to be reliable in any way; if they are unreliable, the PAC prediction set algorithm will output larger sets. Then, we consider prediction sets parameterized by a  real-valued threshold $\tau\in\mathbb{R}$:
\begin{align*}
C_\tau(x)=\{y\in\mathcal{Y}\mid f(x,y)\ge\tau\}.
\end{align*}
In other words, we include all labels with score at least $\tau$.
First, we focus on correctness for $P$, in which case we only need $S_m$, usually referred to as the calibration set. 
Then, a prediction set algorithm constructs a threshold $\hat\tau(S_m)\in\mathbb{R}$ and returns $C_{\hat\tau(S_m)}$.



Finally, we want $\hat\tau$ to satisfy (\ref{eqn:ac}); one caveat is that it may fail to do so due to randomness in $S_m$.
Thus, we allow an additional probability $\delta\in\mathbb{R}$ of failure, resulting in the following desired guarantee:
\begin{align}
\label{eqn:pac}
\mathbb{P}_{S_m\sim P^m} [\mathbb{P}_{(X,Y)\sim P} [Y \in C_{\hat\tau(S_m)}(X)] \geq 1-\epsilon] \geq 1-\delta.
\end{align}
\citet{vovk2012conditional,park2019pac} propose an algorithm $\hat\tau$ that satisfies (\ref{eqn:pac}),
see Appendix~\ref{sec:k}.

Finally, we are interested in constructing PAC prediction sets in the label shift setting, using both the labeled calibration dataset $S_m\sim P^m$ from the source domain, 
and the unlabeled calibration dataset $T_n^X\sim Q^n$ from the target distribution. 
Our goal is to construct $\hat\tau(S_m,T_n^X)$ based on both $S_m$ and $T_n^X$, which satisfies the coverage guarantee over $Q$ instead of $P$:
\begin{align}
\label{eqn:labelpac}
\mathbb{P}_{S_m\sim P^m, T^X_n\sim Q^n_X} \left[\mathbb{P}_{(X,Y)\sim Q}[Y\in C_{\hat\tau(S_m,T_n^X)}(X)]\ge1-\epsilon\right]\geq 1-\delta.
\end{align}
Importantly, the inner probability is over the shifted distribution $Q$ instead of $P$.

\section{Algorithm}


To construct prediction sets valid under label shift,
we first notice that it is enough to find element-wise confidence intervals for the importance weights $w^*$.
Suppose that we can construct $W=\prod_{k\in \mathcal{Y}}[\underline{w}_k,\overline{w}_k]\subseteq\mathbb{R}^K$ 
such that $w^*\in W$.
Then, when adapted to our setting,
the results of \citet{park2021pac}---originally for the covariate shift problem---provide an algorithm 
that returns a threshold  $\hat\tau(S_m,V,W,b)$,
where $V\sim\text{Uniform}([0,1])^K$ is a vector of random variables,
such that 
\begin{align}\label{thm:ipw}
\mathbb{P}_{S_m\sim P^m,V\sim U^K}\left[\mathbb{P}_{(X,Y)\sim Q}[Y\in C_{\hat\tau(S_m,V,W,b)}]\ge1-\epsilon\right]\ge1-\delta.
\end{align}

This is similar to \eqref{eqn:labelpac} but one minor point is that it accounts 
for the randomness used by our algorithm---via $V$---in the outer probability.
We give details in Appendix~\ref{psdsalg}. 

The key challenge is to construct 
the elementwise confidence interval
$W=\prod_{k\in\mathcal{Y}}[\underline{w}_k,\overline{w}_k]$ such that $w^*\in W$ with high probability.
The approach from \cite{park2021pac} for the covariate shift problem relies on training a source-target discriminator, 
which is not possible in our case since we do not have class labels from the target domain. 
Furthermore, \cite{park2021pac} do not provide conditions under which 
one can provide a valid confidence interval for the importance weights.

Our algorithm uses a novel approach, where we propagate intervals through the computation of importance weights.
The weights $w^*$ are determined 
by the system of linear equations
$\mathbf{C}_Pw^*=q^*$.
Since $\mathbf{C}_P$ and $q^*$ are unknown, we start by constructing \emph{element-wise} confidence intervals
\begin{align}
\label{eqn:inputbounds}
\underline{\mathbf{C}}_P\le\mathbf{C}_P\le\overline{\mathbf{C}}_P
\qquad\text{and}\qquad
\underline{q}^*\le q^*\le\overline{q}^*,
\end{align}
with a probability of at least $1-\delta$ over our calibration datasets $S_m$ and $T_n^X$. 
We then propagate these confidence intervals through each step of Gaussian elimination, 
such that at the end of the algorithm, we obtain confidence intervals for its output---i.e.,
\begin{align}
\label{eqn:outputbounds}
\underline{w}^*\le w^*\le\overline{w}^*
\qquad\text{with probability at least}~1-\delta.
\end{align}
Finally, we can use (\ref{eqn:outputbounds}) with the algorithm from \citep{park2021pac} to construct PAC prediction sets under label shift. We describe our approach below.

\subsection{Elementwise Confidence Intervals for \texorpdfstring{$\mathbf{C}_P$}{CP} and \texorpdfstring{$q^*$}{q}}

Recall that $\mathbf{C}_P=(c_{ij})_{ij\in\mathcal{Y}}$ and $q^*=(q_{\hat y})_{{\hat y}\in\mathcal{Y}}$. 
Note that $c_{ij}=\mathbb{P}[g(X)=i,Y=j]$ is the mean of the Bernoulli random variable $\mathbbm{1}(g(X)=i, Y=j)$ over the randomness in $(X,Y)\sim P$. 
Similarly, $q_k$ is the mean of $\mathbbm{1}(g(X)=k)$ over the randomness in $X\sim Q_X$. 
Thus, we can use the Clopper-Pearson (CP) intervals~\citep{clopper1934use}
for a Binomial success parameter 
 to construct intervals around $c_{ij}$ and $q_k$. 
Given  a confidence level $\delta\in(0,1)$ and the
sample mean $\hat{c}_{ij}=\frac{1}{m}\sum_{(x,y)\in S_m}\mathbbm{1}(g(x)=i, y=j)$---distributed as a scaled Binomial random variable---this is an interval $\text{CP}(\hat{c}_{ij},m,\delta)=[\underline{c}_{ij},\overline{c}_{ij}]$ such that
\begin{align*}
\mathbb{P}_{S_m\sim P^m}[c_{ij}\in\text{CP}(\hat{c}_{ij},m,\delta)]\ge1-\delta.
\end{align*}
Similarly, for $q_k$, we can construct CP intervals based on $\hat{q}_k=\frac{1}{n}\sum_{x\in T_n^X}\mathbbm{1}(g(x)=k)$. Together, for confidence levels $\delta_{ij}$ and $\delta_k$ chosen later, we obtain for all $i,j,k\in[K]$,
\begin{align*}
\mathbb{P}_{S_m\sim P^m}\left[ \underline{c}_{ij} \le c_{ij} \le \overline{c}_{ij} \right] &\ge 1 - \delta_{ij},
\qquad
\mathbb{P}_{T_n^X\sim Q_X^n}\left[\underline{q}_k \le q_k \le \overline{q}_k\right] \ge 1 - \delta_k.
\end{align*}
Then, the following result holds by a union bound:
Given any $\delta_{ij},\delta_k\in(0,\infty)$, for all $i,j,k\in[K]$, letting $[\underline{c}_{ij},\overline{c}_{ij}]=\text{CP}(\hat{c}_{ij},m,\delta_{ij})$ and $[\underline{q}_k,\overline{q}_k]=\text{CP}(\hat{q}_k,n,\delta_k)$, and letting $\delta=\sum_{i,j\in[K]}\delta_{ij}+\sum_{k\in[K]}\delta_k$, we have
\begin{align}\label{lem:cp}
\mathbb{P}_{S_m\sim P^m,T_n^X\sim Q_X^n}\left[ \bigwedge_{i,j\in[K]}\underline{c}_{ij} \le c_{ij} \le \overline{c}_{ij}  , \bigwedge_{k\in[K]} \underline{q}_k \le q_k \le \overline{q}_k\right] \ge 1 - \delta.
\end{align}



\subsection{Gaussian Elimination with Intervals}


We also need to set up notation for Gaussian elimination, which requires us to briefly recall the algorithm. 
 To solve $\mathbf{C}_Pw^*=q^*$, Gaussian elimination \citep[see e.g.,][]{golub2013matrix} proceeds in two phases. 
Starting with $c^0=\mathbf{C}_P$ and $q^0=q^*$,
 on iteration $t\ge 1$, Gaussian elimination uses row $k=t$ to eliminate the $k$th column of rows $i\in\{k+1,...,K\}$ (we introduce a separate variable $k$ for clarity). In particular, 
if $c_{kk}^t\neq 0$,
we denote
\begin{align*}
c_{ij}^{t+1} &=
\begin{cases}
c_{ij}^t-\dfrac{c_{ik}^tc_{kj}^t}{c_{kk}^t} &\text{if } i>k, \\
c_{ij}^t &\text{otherwise;}
\end{cases}
\qquad\,
q_i^{t+1} = \begin{cases}
q_i^t - \dfrac{c_{ik}^tq_k^t}{c_{kk}^t} &\text{if }i > k \\
q_i^t &\text{otherwise,}
\end{cases}
\qquad\forall i,j\in[K].
\end{align*}
If $c_{kk}^t= 0$, but there is an element $j>k$ in the $k$th column such that $c_{jk}^t\neq 0$, the $k$th and the $j$th rows are swapped and the above steps are executed. If no such element exists, the algorithm proceeds to the next step. At the end of the first phase, the matrix $c^{K-1}$ has all elements below the diagonal equal to zero---i.e., $c^{K-1}_{ij}=0$ if $j<i$. 
In the second phase, the Gaussian elimination algorithm solves for $w_i^*$ backwards from $i=K$ to $i=1$, introducing the following notation.
For each $i$, if $c_{ii}^{K-1}\neq 0$, we denote\footnote{The algorithm requires further discussion if $c_{ii}^{K-1}=0$ \citep{golub2013matrix}; this does not commonly happen in our motivating application so we will not consider this case. See Appendix~\ref{sec:positivity} for details.}
$w_i^*=(q_i-s_i)/c_{ii}^{K-1}$,
 where $s_i=\sum_{j=i+1}^Kc_{ij}^{K-1}w_j^*$.

In our setting, we do not know $c^0$ and $q^0$; instead, we assume given entrywise confidence intervals as in \eqref{eqn:inputbounds}, which amount to
$\underline{c}^0\le c^0\le\overline{c}^0$ and $\underline{q}^0\le q^0\le\overline{q}^0$. 
We now work on the event $\Omega$ that these bounds hold, 
and prove that our algorithm works on this event; later, we combine this result with Equation~\ref{lem:cp} to obtain a high-probability guarantee. Then, our goal is to compute $\underline{c}^t,\overline{c}^t,\underline{q}^t,\overline{q}^t$ such that
for all iterations $t\in\{0,1,...,K-1\}$, 
we have elementwise confidence intervals
specified by
$\underline{c}^t,\overline{c}^t$,
$\underline{q}^t$ and $\overline{c}^t$
for the outputs 
$c^t,q^t$
of the Gaussian elimination algorithm:
\begin{align}
\label{eqn:invariant}
\underline{c}^t\le c^t\le\overline{c}^t
\qquad\text{and}\qquad
\underline{q}^t\le q^t\le\overline{q}^t.
\end{align}
The base case $t=0$ holds by the assumption.
Next, to propagate the uncertainty through the Gaussian elimination updates
for each iteration $t\in[K-1]$, our algorithm sets
\begin{align}
\label{eqn:clower}
\underline{c}_{ij}^{t+1}&=\begin{cases}
0 &\text{if } i > k,~j \le k, \\
    \underline{c}_{ij}^t - \dfrac{\overline{c}_{ik}^t\overline{c}_{kj}^t}{\underline{c}_{kk}^t} &\text{if } i,j > k, \\
\underline{c}_{ij}^t &\text{otherwise}
\end{cases}
\qquad\forall i,j\in[K]
\end{align}
for the lower bound, and computes
\begin{align}
\label{eqn:cupper}
\overline{c}_{ij}^{t+1}&=\begin{cases}  
0 &\text{if } i > k,~j \le k, \\
\overline{c}_{ij}^t - \dfrac{\underline{c}_{ik}^t\underline{c}_{kj}^t}{\overline{c}_{kk}^t} &\text{if } i,j > k, \\
\overline{c}_{ij}^t &\text{otherwise}
\end{cases}
\qquad\forall i,j\in[K]
\end{align}
for the upper bound. The first case handles the fact that Gaussian elimination is guaranteed to zero out entries below the diagonal, and thus these entries have no uncertainty  remaining.
The second rule 
constructs confidence intervals based on the previous intervals and the algebraic update formulas used in Gaussian elimination for the entries for which $i,j>k$.
For instance, the above confidence intervals use that on the event $\Omega$, and by induction on $t$,
if 
$\underline{c}_{ij}^t\ge0$ and $\underline{c}_{ii}^t>0$ for all $i,j\in[K]$ and for all $t$,
the Gaussian elimination update $c_{ij}^{t+1} = c_{ij}^t-c_{it}^tc_{tj}^t/c_{tt}^t$ can be upper bounded as 
\begin{equation}\label{ind}
c_{ij}^{t+1} = c_{ij}^t-\dfrac{c_{it}^tc_{tj}^t}{c_{tt}^t}
\le 
\overline{c}_{ij}^t - \dfrac{\underline{c}_{it}^t\underline{c}_{tj}^t}{\overline{c}_{tt}^t} 
= \overline{c}_{ij}^{t+1},
\end{equation}
The assumptions that $\underline{c}_{ij}^t\ge0$ and $\underline{c}_{ii}^t>0$ for all $i,j\in[K]$ and for all $t$
may appear a little stringent,
but the former can be removed at the cost of slightly larger intervals propagated to the next step, see Section~\ref{sec:positivity}.
The latter condition is satisfied by any classifier that obtains sufficient accuracy on all labels. 
We further discuss these conditions in Section~\ref{sec:positivity}. 
The third rule in \eqref{eqn:clower} and \eqref{eqn:cupper} handles the remaining entries,
which do not change; and thus the confidence intervals from the previous step can be used. The rules for $q$ are similar, and have a similar justification:
\begin{align}\label{eqn:q}
\underline{q}_i^{t+1}=\begin{cases}
\underline{q}_i^t-\dfrac{\overline{c}_{ik}^t\overline{q}_i^t}{\underline{c}_{kk}^t} &\text{if }i>k, \\
\underline{q}_i^t &\text{otherwise;}
\end{cases}
\qquad
\overline{q}_i^{t+1}=\begin{cases}
\overline{q}_i^t-\dfrac{\underline{c}_{ik}^t\underline{q}_i^t}{\overline{c}_{kk}^t} &\text{if }i>k, \\
\overline{q}_i^t &\text{otherwise.}
\end{cases}
\qquad\forall i\in[K].
\end{align}
For these rules, our algorithm assumes $\underline{q}_i^t\ge0$ for all $i\in[K]$ and all $t$, and raises an error if this fails. As with the first condition above, this one can be straightforwardly relaxed; see Appendix~\ref{sec:positivity}.

In the second phase, we compute $w_i^*$ starting from $i=K$ and iterating to $i=1$. On iteration $i$, we assume we have the confidence intervals $\underline{w}_j^*\le w_j^*\le\overline{w}_j^*$ for $j>i$. Then, we compute confidence intervals for the sum $s_i$, 
which again have a similar justification based on the Gaussian elimination updates:
\begin{align}
\label{eqn:slowerupper}
\underline{s}_i=\sum_{j=i+1}^n\underline{c}_{ij}^{K-1}\underline{w}_j^*
\qquad\text{and}\qquad
\overline{s}_i=\sum_{j=i+1}^n\overline{c}_{ij}^{K-1}\overline{w}_j^*,
\end{align}
and show that they satisfy $\underline{s}_i\le s_i\le\overline{s}_i$ on the event $\Omega$. 
Finally, we compute confidence intervals for $w_i^*$, assuming $\underline{c}_{ii}^{K-1}>0$:
\begin{align}
\label{eqn:wlowerupper}
\underline{w}_i^*=\frac{\underline{q}_i-\overline{s}_i}{\overline{c}_{ii}^{K-1}}
\qquad\text{and}\qquad
\overline{w}_i^*=\frac{\overline{q}_i-\underline{s}_i}{\underline{c}_{ii}^{K-1}},
\end{align}
for which we can show that they satisfy $\underline{w}_i^*\le w_i^*\le\overline{w}_i^*$
based on the Gaussian elimination updates. 
Letting $W=\{w\mid\underline{w}^*\le w\le\overline{w}^*\}$, we have the following (see Appendix~\ref{sec:gaussianelimproof} for a proof).
\begin{lemma}[Elementwise Confidence Interval for Importance Weights]
\label{lem:gaussianelim}
If (\ref{eqn:inputbounds}) holds,
and for all $ i,j,t\in[K]$, $\underline{c}_{ij}^t\ge0$, $\underline{c}_{ii}^t>0$, and $\underline{q}_i^t\ge0$, 
then $w^*=\mathbf{C}_P^{-1}q^*\in W$.
\end{lemma}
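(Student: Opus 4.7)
The statement is purely deterministic: conditional on the hypotheses, we must verify that the propagated intervals for $w^*$ contain the true weight vector. The strategy is to first establish the loop invariant (\ref{eqn:invariant}) by forward induction on $t$, and then establish the bounds $\underline{w}_i^* \le w_i^* \le \overline{w}_i^*$ by backward induction on $i$. At every step, the verification reduces to a short application of interval arithmetic for $+, -, \times, /$, where the hypotheses $\underline{c}_{ij}^t \ge 0$, $\underline{c}_{ii}^t > 0$, and $\underline{q}_i^t \ge 0$ are exactly what is needed to make the elementary monotonicity properties of these operations go through.

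For the forward induction, the base case $t=0$ is (\ref{eqn:inputbounds}). For the inductive step, fix the pivot row $k$ and consider each branch in (\ref{eqn:clower})--(\ref{eqn:q}). When $i > k$ and $j \le k$, Gaussian elimination produces an exact zero, so setting $\underline{c}_{ij}^{t+1} = \overline{c}_{ij}^{t+1} = 0$ is correct; entries with $i \le k$ are untouched and their intervals carry over. The only non-trivial case is $i,j > k$: from $0 \le \underline{c}_{rs}^t \le c_{rs}^t \le \overline{c}_{rs}^t$ and $0 < \underline{c}_{kk}^t \le c_{kk}^t \le \overline{c}_{kk}^t$, monotonicity of multiplication on $[0,\infty)$ and of $x\mapsto 1/x$ on $(0,\infty)$ yield
\begin{align*}
\frac{\underline{c}_{ik}^t \underline{c}_{kj}^t}{\overline{c}_{kk}^t} \;\le\; \frac{c_{ik}^t c_{kj}^t}{c_{kk}^t} \;\le\; \frac{\overline{c}_{ik}^t \overline{c}_{kj}^t}{\underline{c}_{kk}^t},
\end{align*}
which combined with $\underline{c}_{ij}^t \le c_{ij}^t \le \overline{c}_{ij}^t$ and the subtraction rule gives exactly (\ref{eqn:clower}) and (\ref{eqn:cupper}). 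The identical argument with $c_{kj}^t$ replaced by $q_k^t$ establishes (\ref{eqn:q}), where non-negativity of $\underline{q}_k^t$ plays the role of non-negativity of $\underline{c}_{kj}^t$.

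The backward substitution mirrors this structure. For $i=K$, the sum $s_K$ is empty and (\ref{eqn:wlowerupper}) reduces to dividing the $q_K$ interval by a strictly positive $c_{KK}^{K-1}$ interval. For the inductive step, suppose $\underline{w}_j^* \le w_j^* \le \overline{w}_j^*$ for $j>i$; since $w_j^* = q(j)/p(j) \ge 0$ and $\underline{c}_{ij}^{K-1} \ge 0$, the two factors in each summand of $s_i$ are non-negative, so the termwise bounds in (\ref{eqn:slowerupper}) follow and sum to valid bounds on $s_i$. Subtraction-then-division under $\underline{c}_{ii}^{K-1} > 0$ then yields (\ref{eqn:wlowerupper}). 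The main step requiring care is a sign-case observation in this phase: the computed $\underline{w}_j^*$ may itself be negative even though $w_j^* \ge 0$, but the desired inequality $\underline{c}_{ij}^{K-1}\underline{w}_j^* \le c_{ij}^{K-1} w_j^*$ still holds because in that case the left side is non-positive while the right side is non-negative. With this observation in place, the entire argument is a direct verification of interval-arithmetic monotonicity applied to the Gaussian elimination updates, combined at the end with the high-probability event (\ref{lem:cp}) to produce the probabilistic conclusion in (\ref{eqn:outputbounds}).
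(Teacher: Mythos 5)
Your proposal is correct and follows essentially the same route as the paper's proof: forward induction on $t$ to establish the invariant (\ref{eqn:invariant}) via the three update-rule sub-cases and interval-arithmetic monotonicity, followed by backward induction over $i$ for the substitution phase. You actually supply more detail than the paper does (notably the sign-case observation that a negative computed $\underline{w}_j^*$ still gives a valid termwise lower bound because $w_j^*\ge 0$ and $\underline{c}_{ij}^{K-1}\ge 0$), which the paper compresses into ``follows by a similar induction argument.''
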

We mention here that the idea of 
algorithmic uncertainty propagation 
may be of independent interest.
In future work,
it may further be developed to other 
methods for solving linear systems (e.g., the LU decomposition, \cite{golub2013matrix}), 
and other linear algebraic and numerical computations.

\subsection{Overall Algorithm}

\begin{algorithm*}[t]
\small
\caption{PS-W: PAC prediction sets in the label shift setting.}
\label{alg:main}
\begin{algorithmic}[1]
\Procedure{LabelShiftPredictionSet}{$S_m, T^X_n, f, \epsilon, \delta$}
\State $\underline{c},\overline{c},\underline{q},\overline{q}\gets$ \Call{CPInterval}{$S_m, T^X_n, x\mapsto\operatorname*{\arg\max}_{y\in\mathcal{Y}}f(x,y), \frac{K(K+1)}{(K(K+1)+1)}\delta$}
\State $W \gets$ \Call{IntervalGaussianElim}{$\underline{c},\overline{c},\underline{q},\overline{q}$}
\If{$W=\varnothing$} \Return $\varnothing$ 
\EndIf
\State \Return \Call{IWPredictionSet}{$S_m, f, W, \epsilon, \delta/[K(K+1)+1]$}
\EndProcedure
\Procedure{CPInterval}{$S_m, T^X_n, g, \delta$}
\For{$i,j\in[K]$}
\State Compute $[\underline{c}_{ij},\overline{c}_{ij}]=\text{CP}\left(m^{-1}\sum_{(x,y)\in S_m}\mathbbm{1}(g(x)=i,y=j),m,\delta/(K(K+1))\right)$
\EndFor
\For{$k\in[K]$}
\State Compute $[\underline{q}_k,\overline{q}_k]=\text{CP}\left(n^{-1}\sum_{x\in T_n^X}\mathbbm{1}(g(x)=k),n,\delta/(K(K+1))\right)$
\EndFor
\State \Return $\underline{c},\overline{c},\underline{q},\overline{q}$
\EndProcedure
\Procedure{IntervalGaussianElim}{$\underline{c}^0,\overline{c}^0,\underline{q}^0,\overline{q}^0$}
\For{$t\in[1,...,K-1]$}
\For{$i,j\in[K]$}
\State Compute $\underline{c}_{ij}^t,\overline{c}_{ij}^t$ using (\ref{eqn:clower}) \& (\ref{eqn:cupper}), and $\underline{q}_i^t,\overline{q}_i^t$ using (\ref{eqn:q})
\If{$\underline{c}_{ij}^t<0$ for some $i\neq j$ 
or $\underline{c}_{ii}^t\le0$ for some $i$
\textbf{or} $\underline{q}_i^t\le0$ for some $i$,}
\Return $\varnothing$
\EndIf
\EndFor
\For{$i\in[K,...,1]$}
\State Compute $\underline{s}_i^t,\overline{s}_i^t$ using (\ref{eqn:slowerupper}), and $\underline{w}_i,\overline{w}_i$ using (\ref{eqn:wlowerupper})
\EndFor
\EndFor
\State \Return $W=\prod_{i=1}^k[\underline{w}_i,\overline{w}_i]$
\EndProcedure
\Procedure{IWPredictionSet}{$S_m,f,W=\prod_{k=1}^K[\underline{w}_k,\overline{w}_k],\epsilon,\delta$}
\State $V\sim\text{Uniform}([0,1])^m$
\State \Return $\hat\tau(S_m,V,W,\max_{k\in[K]}\overline{w}_k,\epsilon,\delta)$ as in (\ref{eqn:ipw})
\EndProcedure
\end{algorithmic}
\end{algorithm*}

Algorithm~\ref{alg:main} summarizes our approach. As can be seen, the coverage levels for the individual Clopper-Pearson intervals are chosen to satisfy the overall $1-\delta$ coverage guarantee. 
In particular, the PAC guarantee \eqref{eqn:labelpac} follows from 
\eqref{thm:ipw}, 
\eqref{lem:cp}, Lemma~\ref{lem:gaussianelim}, and a union bound.
\begin{theorem}[PAC Prediction Sets under Label Shift]
For any given $\epsilon,\delta\in(0, 1)$, 
under Assumptions \ref{assump:a1}, \ref{assump:a2} and \ref{assump:a3}, 
if $\forall i,j,t\in[K]$, we have
$\underline{c}_{ij}^t\ge0$, $\underline{c}_{ii}^t>0$, and $\underline{q}_i^t\ge0$, then
Algorithm~\ref{alg:main} satisfies
\begin{align*}
\mathbb{P}_{S_m\sim P^m,T_n^X\sim Q^n,V\sim U^m}\left[\mathbb{P}_{(X,Y)\sim Q}[Y\in C_{\hat\tau(S_m,V,W,b)}(X)]\ge1-\epsilon\right]\ge1-\delta.
\end{align*}
\end{theorem}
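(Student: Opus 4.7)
The plan is to decompose the overall failure event into two sub-events, bound each using the pieces already established in the paper, and combine them with a union bound. Let $\mathcal{E}_{\mathrm{CP}}$ be the event that \emph{all} Clopper--Pearson intervals produced inside \textsc{CPInterval} are valid simultaneously, i.e.\ that the joint inclusion in \eqref{lem:cp} holds, and let $\mathcal{E}_{\mathrm{IW}}$ be the event that the prediction set returned by \textsc{IWPredictionSet} achieves target coverage at level $1-\epsilon$. The algorithm sets the CP budget to $\delta_{\mathrm{CP}} \coloneqq \tfrac{K(K+1)}{K(K+1)+1}\delta$, split uniformly across the $K^2$ entries of $\mathbf{C}_P$ and the $K$ entries of $q^*$, and reserves the remaining $\delta_{\mathrm{IW}} \coloneqq \tfrac{\delta}{K(K+1)+1}$ for the downstream IW prediction set. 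Together these give $\delta_{\mathrm{CP}} + \delta_{\mathrm{IW}} = \delta$, which will be the slack for the union bound.

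Next, I would argue the two ingredients separately. For the CP step, the per-entry choices $\delta_{ij} = \delta_k = \delta/(K(K+1))$ feed directly into \eqref{lem:cp}, yielding $\mathbb{P}_{S_m, T_n^X}[\mathcal{E}_{\mathrm{CP}}] \ge 1 - \delta_{\mathrm{CP}}$. On the event $\mathcal{E}_{\mathrm{CP}}$, the hypothesis \eqref{eqn:inputbounds} of Lemma~\ref{lem:gaussianelim} holds, and the extra positivity assumptions $\underline{c}_{ij}^t\ge 0$, $\underline{c}_{ii}^t>0$, $\underline{q}_i^t\ge 0$ stated in the theorem are precisely what the lemma needs. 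Hence, deterministically on $\mathcal{E}_{\mathrm{CP}}$, the interval $W$ constructed by \textsc{IntervalGaussianElim} satisfies $w^* \in W$ and is nonempty (so the early return branch in line 4 is not triggered). For the IW step, invoking \eqref{thm:ipw} with the reserved budget $\delta_{\mathrm{IW}}$ and the bound $b = \max_k \overline{w}_k$ gives $\mathbb{P}_{S_m, V}[\mathcal{E}_{\mathrm{IW}} \mid w^* \in W] \ge 1 - \delta_{\mathrm{IW}}$, where the conditioning is on the deterministic fact that the input interval is valid for $w^*$.

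Finally, I would combine the two estimates via
\[
\mathbb{P}[\mathcal{E}_{\mathrm{CP}} \cap \mathcal{E}_{\mathrm{IW}}] \;\ge\; 1 - \mathbb{P}[\mathcal{E}_{\mathrm{CP}}^c] - \mathbb{P}[\mathcal{E}_{\mathrm{IW}}^c \cap \mathcal{E}_{\mathrm{CP}}] \;\ge\; 1 - \delta_{\mathrm{CP}} - \delta_{\mathrm{IW}} = 1 - \delta,
\]
and observe that on $\mathcal{E}_{\mathrm{CP}} \cap \mathcal{E}_{\mathrm{IW}}$ the prediction set returned by the algorithm has $Q$-coverage at least $1-\epsilon$, which yields the theorem. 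The main delicate point, which I would want to double-check carefully, is the use of \eqref{thm:ipw} when $W$ itself is random and (in Algorithm~\ref{alg:main}) built from the \emph{same} calibration sample $S_m$ that the IW prediction set uses. The cleanest way to resolve this is to interpret \eqref{thm:ipw} as a statement that holds for every deterministic valid $W$, then apply the standard ``work on the good event'' argument: conditional on $\mathcal{E}_{\mathrm{CP}}$, the random $W$ lies in the family of valid intervals, and the coverage bound propagates by the tower property. The positivity hypotheses, which guarantee that the early return $\varnothing$ branches are not executed, are the only non-routine side condition, and everything else is a bookkeeping exercise in the budget split.
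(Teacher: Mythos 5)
Your proposal is correct and follows essentially the same route as the paper, whose entire argument is the one-line observation that the theorem follows from \eqref{thm:ipw}, \eqref{lem:cp}, Lemma~\ref{lem:gaussianelim}, and a union bound with the budget split $\frac{K(K+1)}{K(K+1)+1}\delta + \frac{\delta}{K(K+1)+1} = \delta$; you have simply written out the bookkeeping explicitly. The data-reuse subtlety you flag (that $W$ is built from the same $S_m$ fed to \textsc{IWPredictionSet}) is real and is not addressed in the paper at all, and your resolution via the intersection form $\mathbb{P}[\mathcal{E}_{\mathrm{IW}}^c \cap \mathcal{E}_{\mathrm{CP}}] \le \delta_{\mathrm{IW}}$ is the right way to state what must be proved.
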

As discussed in Appendix~\ref{sec:positivity}, we can remove the requirement that $\underline{c}_{ij}^t\ge0$ and  $\underline{q}_i^t\ge0$ for $i\neq j$.

\section{Experiments}

\subsection{Experimental Setup}


\textbf{Predictive models.} 
We analyze five datasets: the CDC Heart dataset, CIFAR-10, Chest X-ray, AG News, and the Entity-13 dataset; their details are provided in Section \ref{rd}.
We use a two-layer MLP for the CDC Heart data with an SGD optimizer having a learning rate of $0.03$ and a momentum of $0.9$, a batch size of $64$ for $30$ epochs. For CIFAR-10, we finetuned a pretrained ResNet50 \cite{he2016deep}, with a learning rate of $0.01$ for $56$ epochs. 
For the ChestX-ray14 dataset, we use a pre-trained CheXNet~\citep{rajpurkar2017chexnet} with a DenseNet121~\citep{huang2017densely} backbone with learning rate $0.0003$ for $2$ epochs. 
For AGNews, a pre-trained Electra sequence classifier fine-tuned for one epoch with an AdamW optimizer using a learning rate of $0.00001$ is used. For the Entity-13 dataset, we finetuned a pretrained ResNet50, with a learning rate of $0.01$ for $13$ epochs.

\textbf{Hyperparameter choices.} 
There are two user-specified parameters that control the guarantees, namely $\delta$ and $\epsilon$. 
In our experiments, we choose $\delta=5\times 10^{-4}$ to ensure that, over 100 independent datasets $S_m$, 
there is a 95\% probability that 
the error rate is not exceeded.
Specifically, 
this ensures that 
$\mathbb{P}_{(X,Y)\sim P} [Y \in C_{\hat\tau(S_m)}(X)] \geq 1-\epsilon$ holds for all 100 trials, 
with probability approximately $1-0.95^{1/100} \approx 5\times 10^{-4} $. 
We select multiple $\epsilon$s for each dataset in a grid search way, demonstrating the validity of our algorithm.

\textbf{Dataset construction.} We follow the  label shift simulation strategies from previous work~\citep{lipton2018detecting}. First, we split the original dataset into training, source base, and target base. We use the training dataset to train the score function. Given label distributions $P_Y$ and $Q_Y$, we generate the source dataset $S_m$, target dataset $T_n^X$, and a labeled size $o$ target test dataset (sampled from $Q$) by sampling with replacement from the corresponding base dataset. We consider two choices of $P_Y$ and $Q_Y$: (i) a \textbf{tweak-one} shift, where we assign a probability to one of the labels, and keep the remaining labels equally likely, and (ii) \textbf{arbitrary} shift, where we shift each probability as described later.
 
\textbf{Baselines.} We compare our approach (\textbf{PS-W}) with several baselines (see Appendix~\ref{sec:addalg}): 
\begin{itemize}
[topsep=0pt,itemsep=0ex,partopsep=1ex,parsep=0ex,leftmargin=3ex]
\item \textbf{PS:} PAC prediction sets that do not account for label shift~\citep{vovk2012conditional,park2019pac}. This does not come with PAC guarantees under label shift.
\item \textbf{WCP:} Weighted conformal prediction under label shift, which targets marginal coverage~\citep{podkopaev2021distribution}. This does not come with PAC guarantees under label shift either.
\item \textbf{PS-R:} PAC prediction sets that account for label shift but ignore uncertainty in the importance weights; which does not come with PAC guarantees under label shift come with.
\item \textbf{PS-C:} Addresses label shift via a conservative upper bound on the empirical loss (see Appendix~\ref{sec:addalg} for details). This is the only baseline to come with PAC guarantees under label shift.
\end{itemize}
We compare to other baselines in Appendix~\ref{sec:morebase}, and to an oracle with the true weights in Appendix~\ref{apx:true-iw}, more results for different hyperparameters are in Appendix~\ref{sec:more}.

\textbf{Metrics.} We measure performance via the prediction set error, i.e., the fraction of $(x, y) \sim Q$ such that $y \notin C_{\tau}(x)$; 
and the average prediction set size, i.e., the mean of $|C_{\tau}(x)|$, evaluated on the held-out test set. We report the results over $100$ independent repetitions, randomizing both dataset generation and our algorithm. 

\subsection{Results \& Discussion}
\label{rd}


\textbf{CDC heart.}
We use the CDC Heart dataset, a binary classification problem \citep{cdcheart}, 
to predict the risk of  heart attack given features such as level of exercise or weight. 
We consider both large and small shifts.  
\begin{figure}[h]
\centering
\begin{subfigure}[b]{0.48\textwidth}
\includegraphics[height=0.4\linewidth]{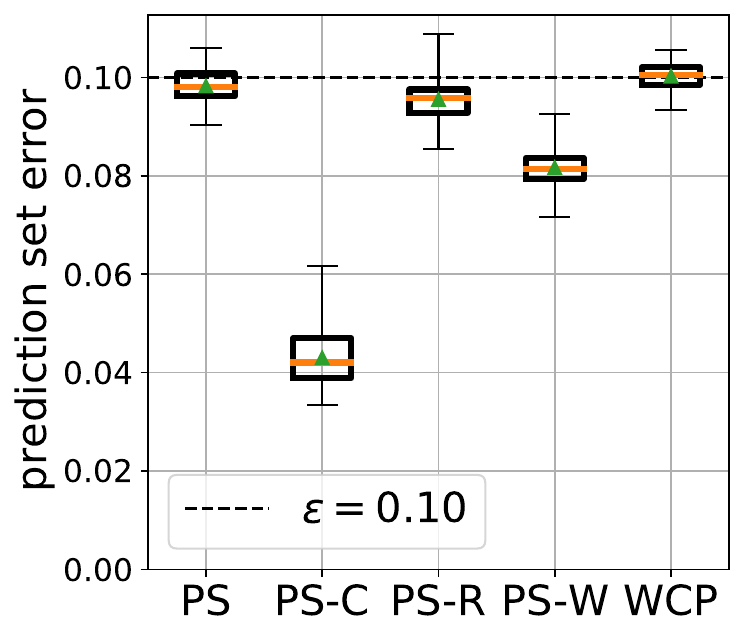}
\includegraphics[height=0.4\linewidth]{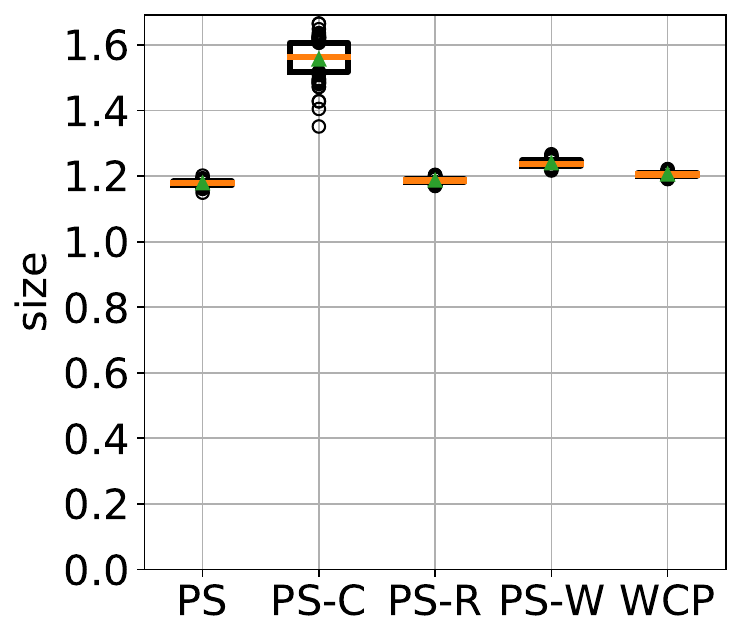}
\caption{Prediction set error and size under \textit{small} shifts on the CDC Heart dataset . Parameters are $\epsilon = 0.1$, $\delta = 5\times 10^{-4}$ , $m=42000$, $n=42000$, and $o= 9750$.}\label{fig:cdc1}
\end{subfigure}\quad
\begin{subfigure}[b]{0.49\textwidth}
\centering
\includegraphics[height=0.4\linewidth]{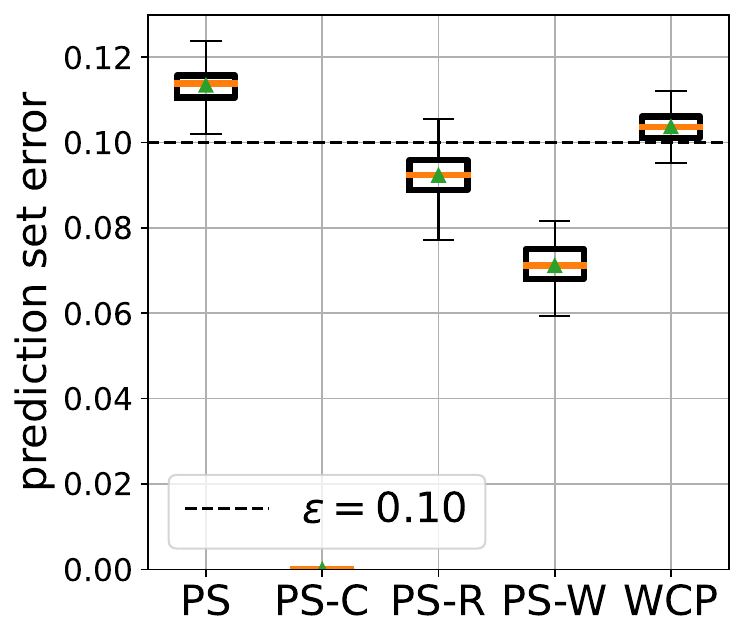}
\includegraphics[height=0.4\linewidth]{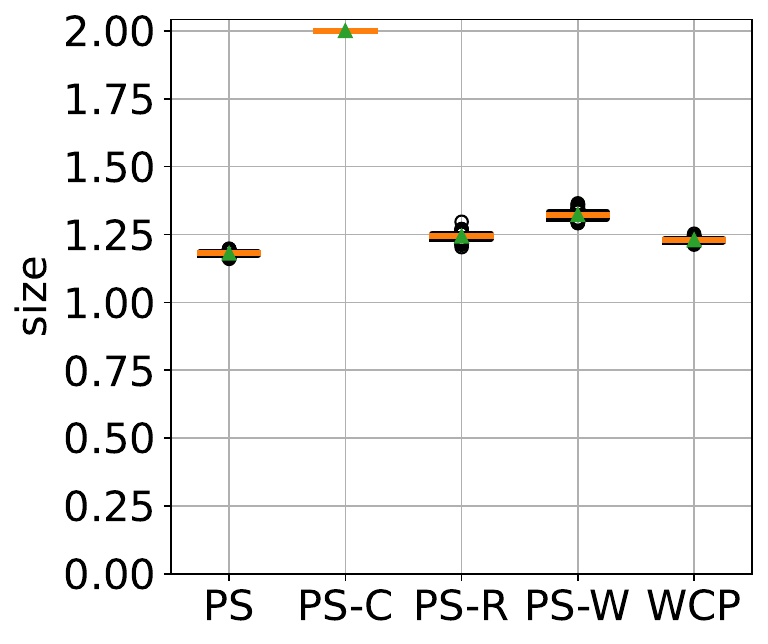}
\caption{Prediction set error and size under \textit{large} shifts on the CDC Heart dataset. Parameters are $\epsilon = 0.1$, $\delta = 5\times 10^{-4}$, $m=42000$, $n=42000$, and $o= 9750$.}\label{fig:cdc2}
\end{subfigure}
\vspace{-2ex}
\caption{Prediction set results on the CDC Heart dataset}
\end{figure}
For the large shift, the label distributions (denoted $(\text{pos}\%,\text{neg}\%)$) are $(94\%, 6\%)$ for source, and $(63.6\%, 36.4\%)$ for target; results are in Figure~\ref{fig:cdc2}.
We also consider a small shift with label distributions $(94\%, 6\%)$ for source, and $(91.3\%, 8.7\%)$ for target; results for $\epsilon = 0.1$ are in Figure~\ref{fig:cdc1}.
As can be seen, our PS-W algorithm satisfies the PAC guarantee while achieving smaller prediction set size than PS-C, the only baseline to satisfy the PAC guarantee. The PS and PS-R algorithms violate the PAC guarantee \footnote{The invisible error boxes are $0$ or extreme small values.}. 
%
%

\textbf{CIFAR-10.}
Next, we consider CIFAR-10~\citep{krizhevsky2009learning}, which has 10 labels. We consider a large and a small tweak-one shift. 
For the large shift, the label probability is 10\% for all labels in the source, 40.0\% for the tweaked label, and 6.7\% for other labels in the target; results are in Figure~\ref{fig:cifar1}. 
For small shifts, we use 10\% for all labels for the source, 18.2\% for the tweaked label, and 9.1\% for other labels for the target; results for $\epsilon = 0.1$ are in Figure~\ref{fig:cifar2}. 
Under large shifts, our PS-W algorithm satisfies the PAC guarantee while outperforming PS-C by a large margin. When the shift is very small, PS-W still satisfies the PAC guarantee, but generates more conservative prediction sets similar in size to those of PS-C (e.g., Figure~\ref{fig:cifar2}) given the limited data.

\begin{figure}[h]
\centering\small
\begin{subfigure}[b]{0.48\textwidth}
\includegraphics[height=0.4\linewidth]{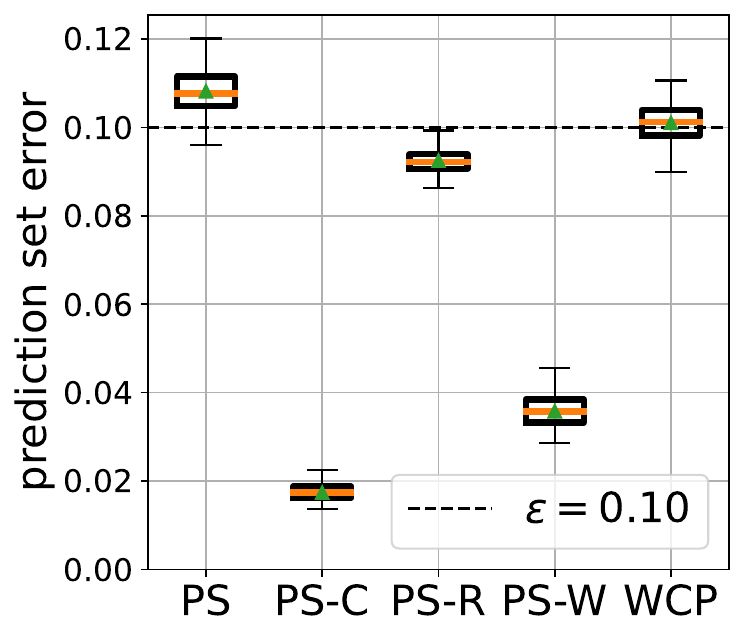}
\includegraphics[height=0.4\linewidth]{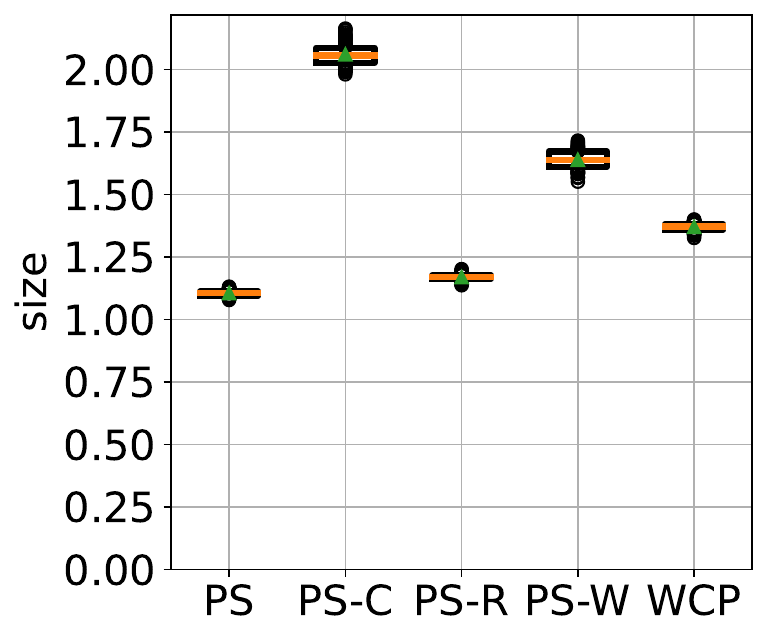}
\caption{Prediction set error and size with \textit{larger} shift on CIFAR-10. Parameters are $\epsilon = 0.1$, $\delta = 5\times 10^{-4}$, $m=27000$, $n=19997$, and $o=19997$.}\label{fig:cifar1}
\end{subfigure}
\quad
\begin{subfigure}[b]{0.48\textwidth}
\includegraphics[height=0.4\linewidth]{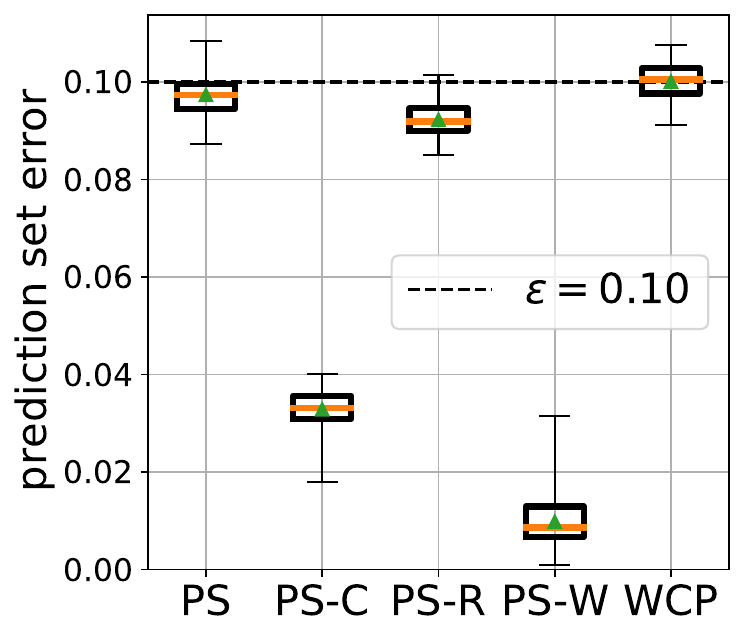}
\includegraphics[height=0.4\linewidth]{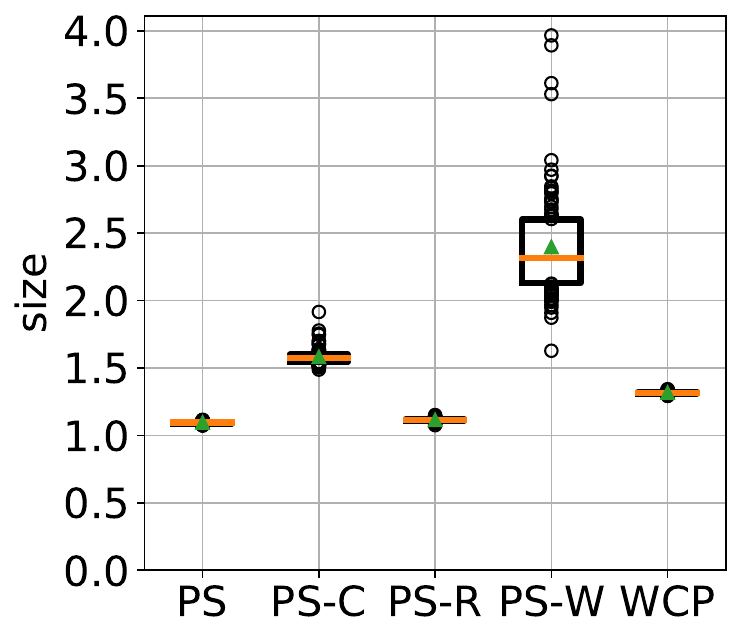}
\caption{Prediction set error and size with \textit{small} shift on CIFAR-10. Parameters are $\epsilon = 0.1$, $\delta = 5\times 10^{-4}$, $m=27000$, $n=16500$, and $o=16500$.}\label{fig:cifar2}
\end{subfigure}
\caption{Prediction set results on CIFAR-10.}
\end{figure}


\textbf{AGNews.}
AG News is a subset of AG's corpus of news articles~\citep{zhang2015character}. It is a text classification dataset with four labels: World, Sports, Business, and Sci/Tech. It contains 31,900 unique examples for each class. We use $\epsilon = 0.05$ and tweak-one label shifts.
We consider a large shift and a medium-sized calibration dataset, with label distributions equalling $(30.8\%, 30.8\%, 7.7\%, 30.8\%)$ for the source, and $(12.5\%, 12.5\%, 62.5\%, 12.5\%)$ for the target; results are in Figure \ref{fig:ag}. As before, our PS-W approach satisfies the PAC guarantee while achieving smaller set sizes than PS-C.

\begin{figure}[t]
\centering
\begin{subfigure}[b]{0.48\textwidth}
\includegraphics[height=0.4\linewidth]{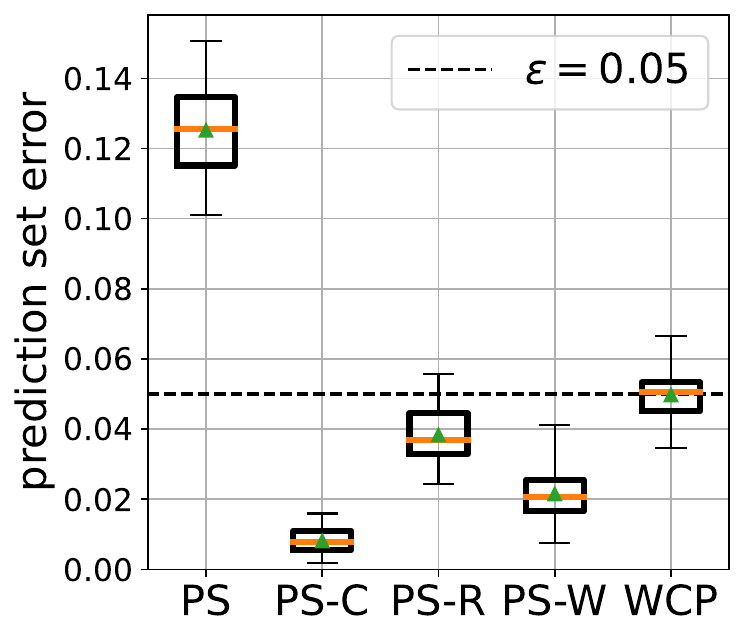}
\includegraphics[height=0.4\linewidth]{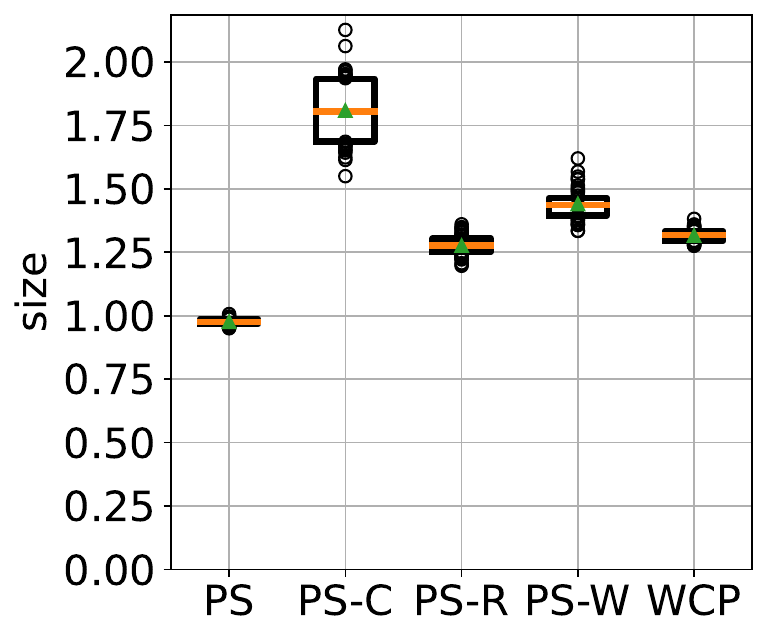}
\caption{Prediction set error and size on the AGNews Dataset. Parameters are $\epsilon = 0.05$, $\delta = 5\times 10^{-4}$, $m=26000$, $n=12800$, and $o=12800$.}\label{fig:ag}
\end{subfigure}
\quad
\begin{subfigure}[b]{0.485\textwidth}
\includegraphics[height=0.4\linewidth]{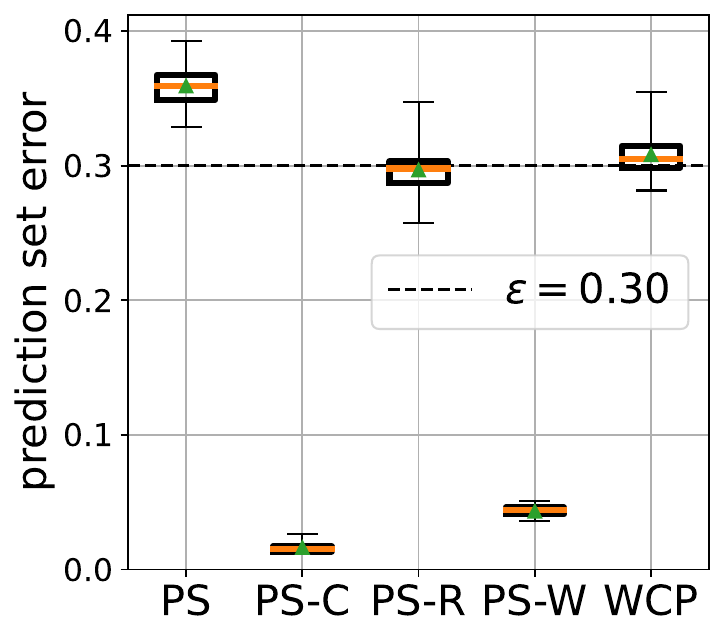}
\includegraphics[height=0.4\linewidth]{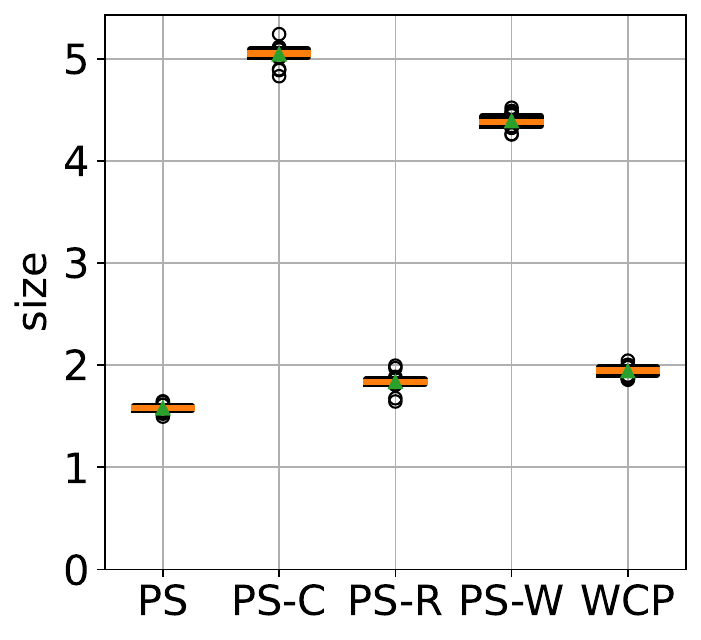}
\caption{Prediction set error and size on the ChestX-ray dataset. Parameters are $\epsilon = 0.3$, $\delta = 5\times 10^{-4}$, $m=67200$, $n=35200$, and $o=3520$.}\label{fig:ch}
\end{subfigure}
\end{figure}

\textbf{Entity-13} Entity-13 is a part of the BREEDs benchmark \citep{santurkar2020breeds}, which leverages the class hierarchy in ImageNet \citep{deng2009imagenet} to repurpose the original classes into superclasses Entity-13 contains 13 superclasses and a total of 390k images. It is also included in a recent label shift benchmark for relaxed label shift, RLSBench \citep{garg2023rlsbench}. We consider a general label shift and, additionally, a small shift with a medium-sized calibration dataset. The label probabilities equal $7.7\%$ each in the source and $(7.1\%, [3.6\%]*4, 10.71\%, 3.6\%, 7.1\%, 43.69\%, [3.6\%]*4)$ in the target. Results for $\epsilon = 0.1$ is shown in Figure \ref{fig:entity13-main}. As before, our PS-W approach satisfies the PAC guarantee while outperforming PS-C. The PS-R and WCP methods violate the constraint.

\begin{figure}[h]
\centering
\includegraphics[height=0.21\linewidth]{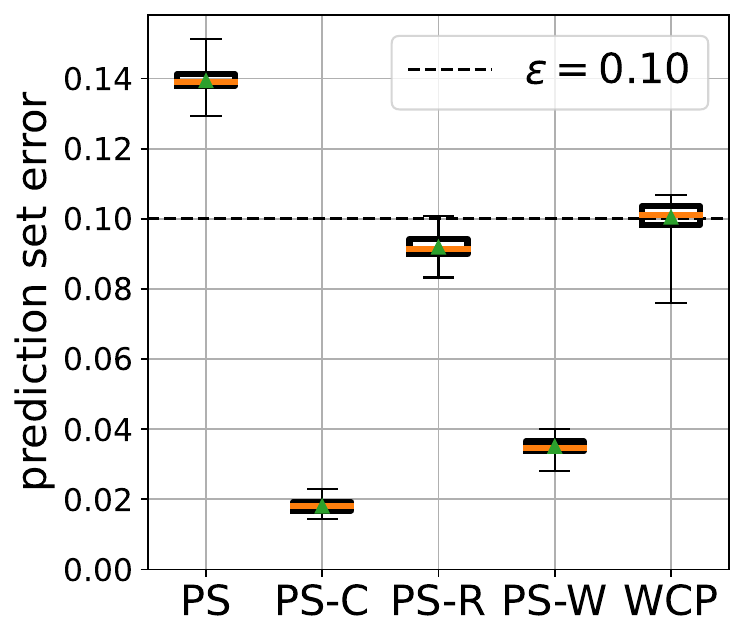}
\includegraphics[height=0.21\linewidth]{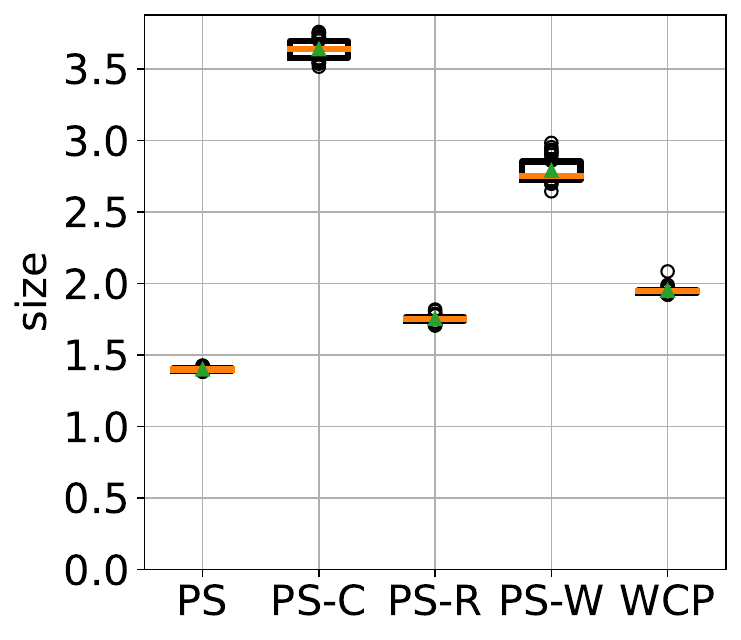}
\caption{Prediction set error and size on the Entity-13 dataset. Parameters are $\epsilon = 0.1$, $\delta = 5\times 10^{-4}$, $m=52000$, $n=21000$, and $o=4667$.}\label{fig:entity13-main}
\end{figure}

\textbf{ChestX-ray.}
ChestX-ray14~\citep{wang2017chestx} is a medical imaging dataset containing about 112K frontal-view X-ray images of 30K unique patients with fourteen disease labels. This dataset contains instances with multiple labels, which we omit. We also omit classes with few positively labeled datapoints, leaving $6$ classes: Atelectasis, Effusion, Infiltration, Mass, 
Nodule, Pneumothorax. 
We consider a large tweak-one shift, with label distributions of $([19.1\%]*4, 4.5\%, 19.1\%)$ for the source, and $([11.1\%]*4, 44.5\%, 11.1\%)$ for the target.
Results for $\epsilon = 0.3$ are in Figure \ref{fig:ch}. 
As before, our PS-W  approach satisfies the PAC guarantee while outperforming PS-C. 

\textbf{Discussion.}
In all our experiments, our approach satisfies the PAC guarantee; furthermore, it produces smaller prediction set sizes than PS-C---the only baseline to consistently satisfy the PAC guarantee---except when the label shift is small and the calibration dataset is limited. In contrast, the PS baseline does not account for label shift, and the PS-R baseline does not account for uncertainty in the importance weights, so they do not satisfy the PAC guarantee. The WCP baseline is designed to target a weaker guarantee, so it naturally does not satisfy the PAC guarantee. Thus, these results demonstrate the efficacy of our approach.

\vspace{-1ex}
\section{Conclusion}
\vspace{-1ex}
We have proposed a PAC prediction set algorithm for the label shift setting that accounts for confidence intervals around importance weights by modifying Gaussian elimination to propagate intervals. Our algorithm provides provable PAC guarantees and produces smaller prediction sets than several baselines that satisfy this condition. Experiments on five datasets demonstrate its effectiveness. Directions for future work include improving performance when the calibration dataset is small or when the label shift is small. 

\typeout{}
\bibliography{iclr2024_conference}
\bibliographystyle{iclr2024_conference}

\clearpage
\appendix
\onecolumn

\section{Additional Related Work}
\label{sec:related}


\textbf{Conformal Prediction.}
Our work falls into the broad area of 
distribution-free uncertainty quantification ~\citep{guttman1970statistical}. Specifically, it builds on ideas from conformal prediction~\citep{vovk2005algorithmic, balasubramanian2014conformal,angelopoulos2021gentle},
which aims to construct prediction sets with finite sample guarantees. 
The prediction sets are realized by a setting a threshold on top of a traditional single-label predictor (i.e. \emph{conformity/non-conformity scoring function}) and predicting all labels with scores above the threshold. 
Our approach is based on inductive conformal prediction~\citep{papadopoulos2002inductive, vovk2012conditional, lei2015conformal}, where the dataset is split into a training set for training the scoring function and a calibration set for constructing the prediction sets.

\textbf{PAC Prediction Sets.} 
Unlike conformal prediction methods that achieve a coverage guarantee of the median coverage performance.
PAC prediction sets consider training-data conditional correctness~\citep{vovk2012conditional,park2019pac}. That is, we achieve a $(\epsilon, \delta)$-guarantee that our performance on all future possible examples would only exceed the desired error rate level $\epsilon$ with probability under $\delta$.
This guarantee is also equivalent to the ``content'' guarantee for tolerance regions~\citep{wilks1941determination, fraser1956nonparametric}. 
All in all, PAC prediction sets obtain a totally different statistical guarantee compared to conformal prediction methods. The preference of the two methods depends on the application scenario. Obviously, PAC prediction sets favor the case when even a slight excess of the desired error rate would cause a significant safety concern.

\begin{table}[h]
\renewcommand{\arraystretch}{1.2}
\centering\small
\bgroup
\setlength{\tabcolsep}{0.2em}
\begin{tabular}{|c|c|c|}
\hline
    & Covariate Shift & Label Shift \\ \hline
Importance weights &   $q(x)/ p(x)$              &     $q(y)/p(y)$        \\ \hline
Invariance                                                  &   $p(y\mid x) = q(y \mid x)$\,              &     $p(x\mid y) = q(x\mid y)$\,        \\ \hline
\end{tabular}\caption{Comparison of covariate shift and label shift.}
\label{tbl:comp-shifts}
\egroup
\end{table}

\textbf{Label shift.}
Label shift~\citep{zadrozny2004learning, huang2006correcting, sugiyama2007direct, gretton2009covariate} is a kind of distribution shift; in contrast to the more widely studied covariate shift, it assumes the conditional covariate distribution is fixed but the label distribution may change; see Table~\ref{tbl:comp-shifts}. In more detail, letting $p$ and $q$ denote the probability density function of the source and target domains, respectively, we assume that $q(y)$ may be shifted compared to $p(y)$, but $p(x \mid y) = q(x \mid y)$. 
Label shift can arise when sets of classes change (e.g., their frequency increases) in scenarios like medical diagnosis and object recognition~\citep{storkey2009training, saerens2002adjusting, lipton2018detecting}. 

Early solutions required estimation of $q(x), p(x\mid y)$, often relying on kernel methods, which may scale poorly with data size and dimension~\citep{chan2005word, storkey2009training, zhang2013domain}. More recently, two approaches achieved scalability by assuming an approximate relationship 
between the classifier outputs $\hat{y}$ and ground truth labels $y$~\citep{lipton2018detecting, azizzadenesheli2019regularized, saerens2002adjusting}: Black Box Shift Estimation (BBSE)~\citep{lipton2018detecting} and RLLS~\citep{azizzadenesheli2019regularized} provided consistency results and finite-sample guarantees assuming the confusion matrix is invertible. Subsequent work developed a unified framework that decomposes the error in computing importance weights into miscalibration error and estimation error, with BBSE as a special case~\citep{garg2020unified}; this approach was extended to open set label shift domain adaptation~\citep{garg2022domain}.

\textbf{Conformal methods and distribution shift.} Due to its distribution-free nature, conformal prediction has been successfully dealing with distribution shift problem. Previous works mainly focus on covariate shift~\citep{tibshirani2019conformal, lei2021conformal,qiu2022distribution}.
\citep{podkopaev2021distribution} considers label shift; however, they assume that the true importance weights are exactly known, which is rarely the case in practice. While importance weights can be estimated,
there is typically uncertainty in these estimates that must be accounted for, especially in high dimensions.

The rigorous guarantee of PAC prediction sets requires quantifying the uncertainty in each part of the system. Thus, we apply a novel linear algebra method to obtain the importance weights confidence intervals for the uncertainty of label shift estimation. This setting has been studied in the setting of covariate shift~\citep{park2021pac}, but under a much simpler distribution structure.
In contrast, the data structure of label shift problem is more complex in the unsupervised label shift setting. 

More broadly, prediction sets have been studied in the meta-learning setting~\citep{dunn2018distribution,park2022pac}, as well as the setting of robustness to all distribution shifts with bounded $f$-divergence~\citep{cauchois2020robust}.

\textbf{Class-conditional prediction sets.}
Although not designed specifically for solving the label shift problem, methods for class-conditional coverage~\citep{sadinle2019least} and adaptive prediction sets (APS)~\citep{romano2020classification} can improve robustness to label shifts. However, class-conditional coverage is a stronger guarantee that leads to prediction sets larger than for our algorithm, while APS does not satisfy a PAC guarantee; we compare to these approaches in Appendix~\ref{sec:morebase}.


\section{Background on PAC Prediction Sets}\label{sec:k}
Finding the maximum $\tau$ that satisfies (\ref{eqn:pac}) is equivalent to choosing $\tau$ such that the empirical error
\begin{equation*}
    \bar{L}_{S_m}(C_{\tau}):=\sum_{(x,y)\in S_m} \mathbbm{1}(y \notin C_{\tau}(x))
\end{equation*}
on the calibration set $S_m$ is bounded~\citep{vovk2005algorithmic,park2019pac}. Let $F(k; m, \epsilon)=\sum^k_{i=0} \binom mk \epsilon^i(1-\epsilon)^{m-i}$ be the cumulative distribution function (CDF) of the binomial distribution $\text{Binom}(m, \epsilon)$ with $m$ trials and success probability $\epsilon$ evaluated at $k$. Then, \citet{park2019pac} constructs $C_{\hat{\tau}}$ via
\begin{align}
&\hat{\tau} = \max_{\tau\in T} \tau \quad \text{subj. to} \quad \bar{L}_{S_m} (C_{\tau})\leq k(m, \epsilon, \delta) \label{eq:max}, 
\\
&\text{where}~~
k(m, \epsilon, \delta) = \max_{k\in \mathbb{N} \cup \{0\}} k ~~ \text{subj. to}~~ F(k; m, \epsilon) \leq \delta. \nonumber
\end{align}
Their approach is equivalent to the method from~\citet{vovk2012conditional}, but formulated in the language of learning theory. By viewing the prediction set as a binary classifier, the PAC guarantee via this construction can be connected to the Binomial distribution. Indeed, for fixed $C$, $\bar{L}_{S_m}(C)$ has distribution $\text{Bionm}(m, L_P(C))$, since $\mathbbm{1}(y \notin C(x))$ has a $\text{Bernoulli}(L_P (C))$ distribution when $(x, y)\sim P$. 
Thus, $k(m, \epsilon, \delta)$ defines a bound such that if
$L_P (C) \leq \epsilon$, 
then $\bar{L}_{S_m}(C)\leq  k(m, \epsilon, \delta)$ with probability at least $1-\delta$.

\section{Background on Prediction Sets Under Distribution Shift}
\label{psdsalg}

Here we demonstrate how to obtain prediction sets
given intervals $\underline{w}^*\le w^*\le\overline{w}^*$ around the true importance weights. This approach is based closely on the strategy in \citep{park2021pac} for constructing prediction sets under covariate shift, but adapts it to the label shift setting (indeed, our setting is simpler since there are finitely many importance weights). The key challenge is computing the importance weight intervals, which we describe in detail below.

Given the true importance weights $w^*$, one strategy would be to use rejection sampling~\citep{von195113, shapiro2003monte,rubinstein2016simulation} to subsample $S_m$ to obtain a dataset that effectively consists of $N\le m$ i.i.d. samples from $Q$ (here, $N$ is a random variable, but this turns out not to be an issue). Essentially, for each $(x_i,y_i)\in S_m$, we sample a random variable $V_i\sim\text{Uniform}([0,1])$, and then accept samples where $V_i\ge w_{y_i}^*/b$, where $b$ is an upper bound on $w_y^*$:
\begin{align*}
T_N(S_m,V,w^*,b)=\left\{(x_i,y_i)\in S_m\biggm\vert V_i\ge\frac{w_{y_i}^*}{b}\right\}.
\end{align*}
In our setting, we can take $b=\max_{y\in\mathcal{Y}}w_y^*$. Then, we return $\hat\tau(T_N(S_m,V,w^*,b))$. Since $T_N(S_m,V,w^*,b)$ consists of an i.i.d. sample from $Q$, we obtain the desired PAC guarantee (\ref{eqn:labelpac}).

In practice, we do not know the true importance weights $w^*$. Instead, suppose we can obtain intervals $W_y=[\underline{w}_y^*,\overline{w}_y^*]$ such that $w_y^*\in W_y$ with high probability. We let $W=\prod_{y\in\mathcal{Y}}W_y$, and assume $w^*\in W$ with probability at least $1-\delta$. The algorithm proposed in \citep{park2021pac} adjusts the above algorithm to conservatively account for this uncertainty---i.e., it chooses $\tau$ so the PAC guarantee (\ref{eqn:labelpac}) holds for \emph{any} importance weights $w\in W$:
\begin{align}
\label{eqn:ipw}
\hat\tau(S_m,V,W,b)=\min_{w\in W}\hat\tau(T_N(S_m,V,w,b)).
\end{align}
We minimize over $\tau$ since choosing smaller $\tau$ leads to larger prediction sets, which is more conservative. \citep{park2021pac} show how to compute (\ref{eqn:ipw}) efficiently. We have the following guarantee:
\begin{theorem}[Theorem 4 in \citep{park2021pac}]
\label{thm:ipw2}
Assume that $w^*\in W$. Letting $U=\text{Uniform}([0,1])$,
\begin{align*}
\mathbb{P}_{S_m\sim P^m,V\sim U^m}\left[\mathbb{P}_{(X,Y)\sim Q}[y\in C_{\hat\tau(S_m,V,W,b)}]\ge1-\epsilon\right]\ge1-\delta.
\end{align*}
\end{theorem}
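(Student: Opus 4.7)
The plan is to chain three ingredients via a union bound sized exactly so that the two failure budgets sum to $\delta$: the entrywise Clopper--Pearson coverage in \eqref{lem:cp}, the deterministic interval propagation in Lemma~\ref{lem:gaussianelim}, and the conditional importance-weighted prediction-set guarantee \eqref{thm:ipw}. Algorithm~\ref{alg:main} has already allocated these budgets: \textsc{CPInterval} uses total level $K(K+1)\delta/[K(K+1)+1]$, split equally across the $K^2+K$ individual intervals (so each is at level $\delta/[K(K+1)+1]$), while \textsc{IWPredictionSet} uses the remaining level $\delta/[K(K+1)+1]$.

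The first step is to set $\Omega_0$ to be the event that all $K^2+K$ Clopper--Pearson intervals simultaneously contain their targets $c_{ij}$ and $q_k$; by \eqref{lem:cp}, $\mathbb{P}_{S_m,T_n^X}[\Omega_0^c]\le K(K+1)\delta/[K(K+1)+1]$. On $\Omega_0$ the input-bound condition \eqref{eqn:inputbounds} holds with $\underline{c}^0,\overline{c}^0,\underline{q}^0,\overline{q}^0$ the outputs of \textsc{CPInterval}, and the theorem's standing positivity hypotheses $\underline{c}_{ij}^t\ge0$, $\underline{c}_{ii}^t>0$, $\underline{q}_i^t\ge0$ supply precisely the remaining hypotheses of Lemma~\ref{lem:gaussianelim} while also ruling out early termination of \textsc{IntervalGaussianElim} with $W=\varnothing$. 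Lemma~\ref{lem:gaussianelim} then delivers $w^*=\mathbf{C}_P^{-1}q^*\in W$ as a deterministic consequence on $\Omega_0$; in other words, $\Omega_0\subseteq\{w^*\in W\}$.

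Writing $F=\{\mathbb{P}_{(X,Y)\sim Q}[Y\in C_{\hat\tau(S_m,V,W,b)}(X)]<1-\epsilon\}$ for the target failure event, the next step is to apply \eqref{thm:ipw} at confidence level $\delta/[K(K+1)+1]$ (matching the level passed to \textsc{IWPredictionSet}) to obtain $\mathbb{P}[F\cap\{w^*\in W\}]\le\delta/[K(K+1)+1]$. A single union bound then closes the proof:
\begin{align*}
\mathbb{P}[F]\le\mathbb{P}[F\cap\{w^*\in W\}]+\mathbb{P}[\{w^*\in W\}^c]\le\tfrac{\delta}{K(K+1)+1}+\mathbb{P}[\Omega_0^c]\le\delta,
\end{align*}
where the middle inequality uses the inclusion $\Omega_0\subseteq\{w^*\in W\}$ from the previous step and the final equality is precisely the budget split built into Algorithm~\ref{alg:main}.

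The main obstacle is the subtle point that \eqref{thm:ipw} (cf.\ Theorem~\ref{thm:ipw2}) is stated for $W$ viewed as fixed, whereas here $W=W(S_m,T_n^X)$ is random and shares the sample $S_m$ with \textsc{IWPredictionSet}. The plan is to handle this via the conservative minimization $\hat\tau(S_m,V,W,b)=\min_{w\in W}\hat\tau(T_N(S_m,V,w,b))$ recalled in Appendix~\ref{psdsalg}: on $\{w^*\in W\}$ this minimization yields $\hat\tau\le\hat\tau(T_N(S_m,V,w^*,b))$, so the returned prediction set contains the idealized one built from the true-weight rejection subsample $T_N(S_m,V,w^*,b)$; since that subsample is an i.i.d.\ draw from $Q$, the standard PAC prediction-set bound of Appendix~\ref{sec:k} applies unconditionally, and failure of our set implies failure of the idealized one on $\{w^*\in W\}$. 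This yields the conditional bound $\mathbb{P}[F\cap\{w^*\in W\}]\le\delta/[K(K+1)+1]$ used above without requiring $W$ to be non-random.
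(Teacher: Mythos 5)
You have, for most of the proposal, proven a different statement. Theorem~\ref{thm:ipw2} takes $w^*\in W$ as a \emph{hypothesis}: $W$ is a given box assumed to contain the true weights, and the claim is the coverage guarantee at the full level $\delta$ for the randomized importance-weighted threshold $\hat\tau(S_m,V,W,b)$ of \eqref{eqn:ipw}. Nothing about Clopper--Pearson coverage, Lemma~\ref{lem:gaussianelim}, the event $\Omega_0$, or the budget split $\delta=\tfrac{K(K+1)}{K(K+1)+1}\delta+\tfrac{1}{K(K+1)+1}\delta$ belongs in its proof; that union-bound chain is the proof of the paper's end-to-end guarantee \eqref{eqn:labelpac} for Algorithm~\ref{alg:main}, which \emph{invokes} Theorem~\ref{thm:ipw2} as one of its ingredients. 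The paper itself does not re-prove the target statement at all: it is imported verbatim as Theorem 4 of \citep{park2021pac}, with only the informal rejection-sampling discussion of Appendix~\ref{psdsalg} as justification. Also note that proving the bound at level $\delta/[K(K+1)+1]$, as your main chain does, is not what is asked --- that level only appears when the theorem is later instantiated inside Algorithm~\ref{alg:main}.

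The saving grace is that your final paragraph contains exactly the argument that does prove the target statement, and it coincides with the paper's (cited) reasoning: since $w^*\in W$, the conservative minimization \eqref{eqn:ipw} gives $\hat\tau(S_m,V,W,b)\le\hat\tau(T_N(S_m,V,w^*,b))$, so the returned prediction set contains the idealized one built from the true-weight rejection subsample; that subsample is distributed as an i.i.d.\ sample from $Q$, so the unweighted PAC bound of Appendix~\ref{sec:k} applies to it, and failure of the returned set implies failure of the idealized one. To make that sketch a complete proof you would need to (i) strip out the $\Omega_0$ machinery and run the argument at level $\delta$ directly under the standing assumption $w^*\in W$, and (ii) handle the fact that the accepted sample size $N$ is random --- e.g.\ by conditioning on $N$ and noting the binomial argument of Appendix~\ref{sec:k} is valid for every realized $N$ --- a point the paper waves at (``this turns out not to be an issue'') but which your write-up should not leave implicit.
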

In other words, the PAC guarantee (\ref{eqn:labelpac}) holds, with the modification that the outer probability includes the randomness over the samples $V\sim U^m$ used by our algorithm.

\section{Ensuring the Confidence Bounds at each Step}
\label{sec:positivity}

The diagonal elements $c_{kk}$ of
the confusion matrix of an accurate classifier, are typically much larger than the other elements.
Indeed, for an accurate classifier, 
the probabilities of correct predictions $c_{kk} = P(g(x)=k, y=k)$ are higher than those of incorrect predictions $c_{ik} := P(g(x)=i, y=k), i\neq k$. 
On the other hand, the Clopper-Pearson interval 
is expected to be short (for instance, the related Wald interval has length or order $1/\sqrt{m}$, where $m$ is the sample size).
Thus, we expect that 
\begin{equation}\label{eqn:update}
    \overline{c}^0_{ik} \ll \underline{c}^0_{kk}, k = 1, \dots, K, i \neq k.
\end{equation}
Without loss of generality,  we consider
\eqref{eqn:clower} as an example. 
In the Gaussian elimination process, recall that the update at step $t$ is
\begin{equation}\label{eqn:jk}
\underline{c}_{ij}^{t+1} =  \underline{c}_{ij}^t - \dfrac{\overline{c}^t_{ik}}{\underline{c}_{kk}^t}\overline{c}_{kj}^t \quad\text{if } i,j > k.
\end{equation} 
Combining with \eqref{eqn:update}, the factor by which the $k$-th row is multiplied is small, i.e., $\overline{c}^t_{ik}/\underline{c}^t_{kk} \ll 1$.
Thus the resulting $i$-th diagonal elements 
\begin{equation*}
\underline{c}_{ii}^{t+1} =  \underline{c}_{ii}^t - \dfrac{\overline{c}^t_{ik}}{\underline{c}_{kk}^t}\overline{c}_{ki}^t
\end{equation*}
change little after each elimination step, and are expected to remain positive. 
Next we discuss intervals for off-diagonal elements.

\textbf{Balanced classifier.} For a \emph{balanced classifier}, when  $c_{ik}$ and $c_{jk}$ are close for all $i,j$ such that $i\neq k$, $j\neq k$, 
since the factor $\overline{c}^t_{ik}/\underline{c}^t_{kk}$ is small, the  lower bound in \eqref{eqn:jk} is expected to be positive.

\textbf{Imbalanced classifier.} For the more general case of a possibly imbalanced classifier, $c_{ij}$ and $c_{kj}$ may not be close. 
This could cause non-positive bounds at certain steps, so the confidence interval may not be valid at the next steps; e.g., \eqref{ind} may fail.
However, note that since
$$c_{ik}^t \in [\underline{c}_{ik}^t,\overline{c}_{ik}^t],\quad
c_{kj}^t \in [\underline{c}_{kj}^t,\overline{c}_{kj}^t],$$
we have
$$c_{ik}^t c_{kj}^t \le
\max\{|\underline{c}_{ik}^t|,|\overline{c}_{ik}^t|\}
\cdot
\max\{|\underline{c}_{kj}^t|,|\overline{c}_{kj}^t|\}$$ 
and hence
$$\frac{c_{ik}^t c_{kj}^t}{c_{kk}^t} \le
\frac{\max\{|\underline{c}_{ik}^t|,|\overline{c}_{ik}^t|\}
\cdot
\max\{|\underline{c}_{kj}^t|,|\overline{c}_{kj}^t|\}}
{\underline{c}_{kk}^t}.$$
In fact, one can derive the even tighter bound 
$$\max\left(\dfrac{\overline{c}_{ik}^t\overline{c}_{kj}^t}{\underline{c}_{kk}^t}, \dfrac{\underline{c}_{ik}^t\overline{c}_{kj}^t}{\underline{c}_{kk}^t}, \dfrac{\underline{c}_{ik}^t\underline{c}_{kj}^t}{\underline{c}_{kk}^t},
\dfrac{\overline{c}_{ik}^t\underline{c}_{kj}^t}{\underline{c}_{kk}^t},\dfrac{\overline{c}_{ik}^t\overline{c}_{kj}^t}{\overline{c}_{kk}^t}, \dfrac{\underline{c}_{ik}^t\overline{c}_{kj}^t}{\overline{c}_{kk}^t}, \dfrac{\underline{c}_{ik}^t\underline{c}_{kj}^t}{\overline{c}_{kk}^t},
\dfrac{\overline{c}_{ik}^t\underline{c}_{kj}^t}{\overline{c}_{kk}^t}\right).$$ 
This can be checked by carefully going through all possible cases of positive and negative values of the bounds.
 Similar changes can be made to computing the upper bounds.

It is possible for our final interval $W$ to contain negative lower bounds due to loose element-wise intervals or other factors. 
Since importance weights are non-negative, 
negative importance weight bounds act the same way as zero lower bounds in rejection sampling, and preserve our guarantees.

Finally, 
the requirement of an accurate classifier
is already imposed by methods such as BBSE to ensure the invertibility of the confusion matrix. 
Therefore, our Gaussian elimination approach does not impose significantly stronger assumptions.

\section{Proof of Lemma~\ref{lem:gaussianelim}}
\label{sec:gaussianelimproof}

For the first phase, we prove by induction on $t$ that (\ref{eqn:invariant}) holds for all $t$. The base case $t=0$ holds by assumption. For the induction step, we focus on $\underline{c}_{ij}^{t+1}$; the remaining bounds $\overline{c}_{ij}^{t+1}$, $\underline{q}_k^{t+1}$, and $\overline{q}_k^{t+1}$ follow similarly. 
There are three sub-cases i, each corresponding to one of the update rules in (\ref{eqn:clower}). 
For the first update rule $\overline{c}_{ij}^{t+1}=0$, \eqref{eqn:invariant} follows since the Gaussian elimination algorithm guarantees that $c_{ij}^{t+1}=0$.
For the second and third update rules, \eqref{eqn:invariant}  follows by direct algebra and the induction hypothesis.
For instance, for $i,j>t$, \eqref{ind} holds,
and similarly
$c_{ij}^{t+1} \ge  \underline{c}_{ij}^{t+1}$.

For the second phase, the fact that $\underline{s}\le s\le\overline{s}$ and $\underline{w}^*\le w^*\le\overline{w}^*$ follows by a similar induction argument. Since Gaussian elimination guarantees that $w^*=\mathbf{C}_P^{-1}q^*$, and we have shown that $w^*\in W=\prod_{i=1}^K[\underline{w}_i,\overline{w}_i]$, the claim follows. $\qed$



\section{Oracle Importance Weight Results}\label{apx:true-iw}

Here, we show comparisons to an oracle that is given the ground truth importance weights (which are unknown and must be estimated in most practical applications). It uses rejection sampling according to these weights rather than conservatively accounting for uncertainty in the weights.

First, for the CDC heart dataset, we consider the following label distributions: source $(94\%, 6\%)$, target: $(63.6\%, 36.4\%)$; see Figure~\ref{oc} for the results. Second, for the CIFAR-10 dataset, we consider the following label distributions: source 
$(10\%, 10\%, 10\%, 10\%, 10\%$, $10\%, 10\%$, $10\%, 10\%, 10\%)$, target: $(6.7\%, 6.7\%, 6.7\%, 40.0\%,  6.7\%, 6.7\%, 6.7\%, 6.7\%, 6.7\%, 6.7\%)$; see Figure \ref{oc10} for results.
\begin{figure}[t]
\centering
\begin{subfigure}[b]{0.48\textwidth}
\includegraphics[height=0.42\linewidth]{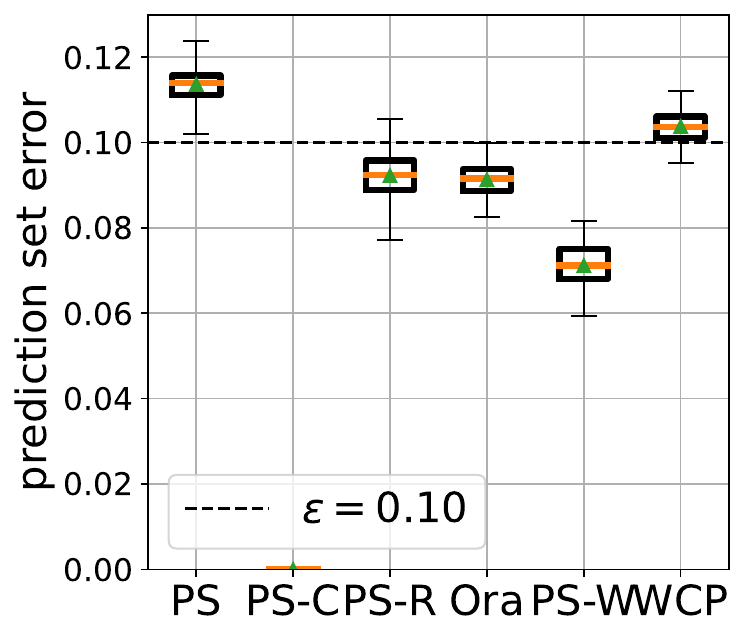}
\includegraphics[height=0.42\linewidth]{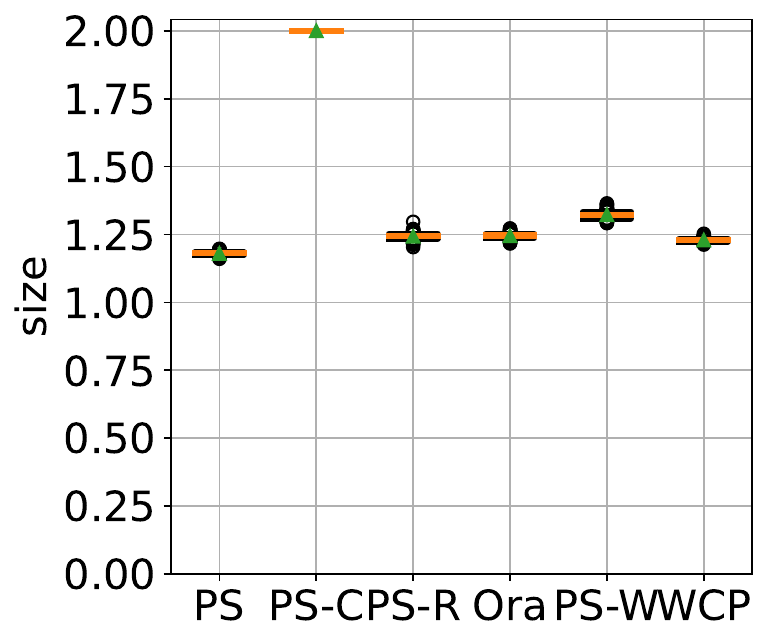}
\caption{Prediction set error and size on the CDC dataset. Parameters are $\epsilon = 0.1$, $\delta = 5\times 10^{-4}$, $m=42000$, $n=42000$, and $o= 9750$.}\label{oc}
\end{subfigure}
\quad
\begin{subfigure}[b]{0.485\textwidth}
\includegraphics[height=0.41\linewidth]{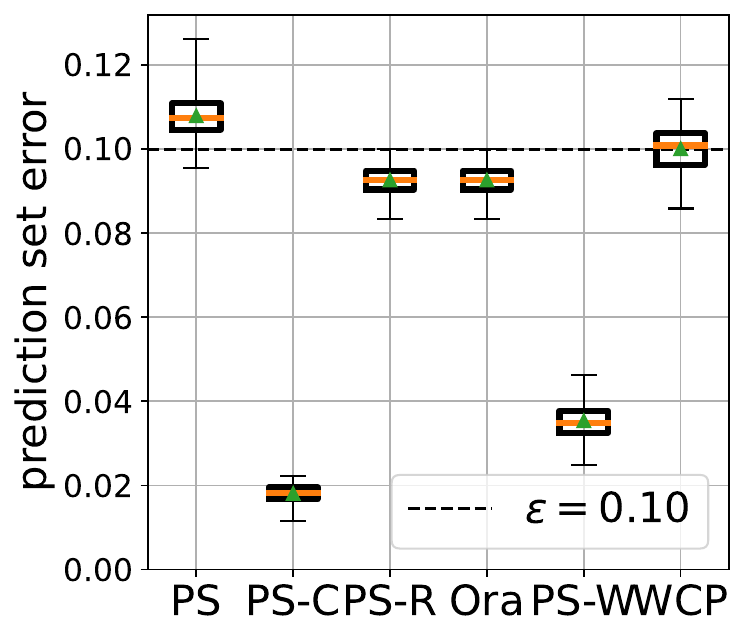}
\includegraphics[height=0.41\linewidth]{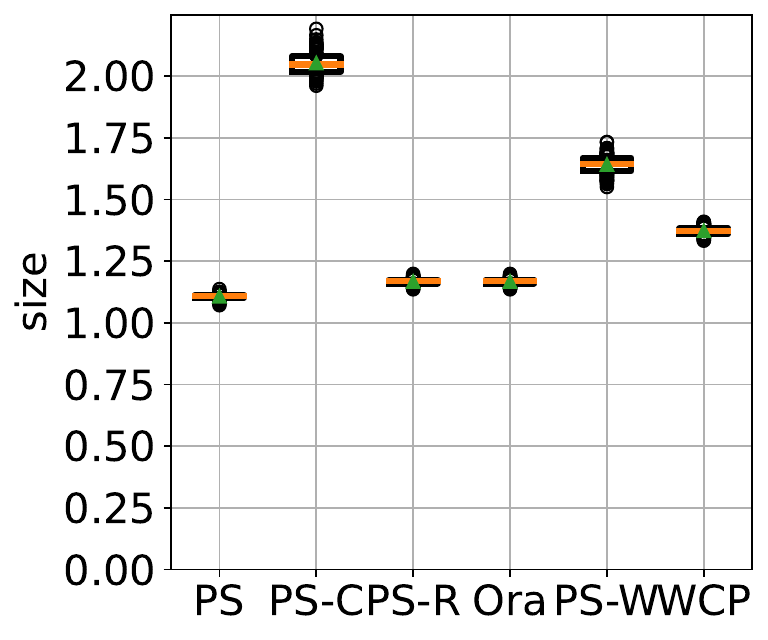}
\caption{Prediction set error and size on CIFAR10. Parameters are $\epsilon = 0.1$, $\delta = 5\times 10^{-4}$, $m=27000$, $n=19997$, and $o=19997$.}\label{oc10}
\end{subfigure}
\caption{Prediction set results with comparison to the oracle importance weight (Ora).}
\end{figure}

\section{Conservative Baseline}\label{sec:addalg}

We describe PS-C, the conservative baseline summarized in Algorithm~\ref{alg:psc}. In particular, given an upper bound $b\ge w^*$ on the importance weight, we use the upper bound
\begin{align*}
\mathbb{E}_{(X,Y)\sim P}[\ell(g(X),Y)\cdot w_Y^*]\le b\cdot \mathbb{E}_{(X,Y)\sim P}[\ell(g(X),Y)].
\end{align*}
As a consequence, we can run the original prediction set algorithm from \cite{vovk2012conditional,park2019pac} with a more conservative choice of $\epsilon$ that accounts for this upper bound.

\begin{algorithm*}
\caption{PS-C: an algorithm using the CP bound in \eqref{eq:maxiw}.}
\label{alg:psc}
\begin{algorithmic}[1]
\Procedure{PS-C}{$S_m, T^X_n, f, \mathcal{T}, \epsilon, \delta_w, \delta_C$} 
\State $\underline{c},\overline{c},\underline{q},\overline{q}\gets$ \Call{CPInterval}{$S_m, T^X_n, x\mapsto\operatorname*{\arg\max}_{y\in\mathcal{Y}}f(x,y), \delta_w$}
\State $W \gets$ \Call{IntervalGaussianElim}{$\underline{c},\overline{c},\underline{q},\overline{q}$}
\State $b \gets \max_{k\in[K]}\overline{w}_k$ 
\State \Return \Call{PS}{$S_m, f, \mathcal{T}, \epsilon / b, \delta_C$}
\EndProcedure
\end{algorithmic}
\end{algorithm*}

\begin{lemma}
Algorithm~\ref{alg:psc} satisfies the PAC guarantee under label shift \eqref{eqn:labelpac}.
\end{lemma}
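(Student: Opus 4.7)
The plan is to decompose the randomness into two pieces and then use the standard Markov-style inequality that relates target risk to source risk through the importance weights. First, note the identity
\begin{align*}
\mathbb{P}_{(X,Y)\sim Q}[Y\notin C_\tau(X)]
=\mathbb{E}_{(X,Y)\sim P}\bigl[\mathbbm{1}(Y\notin C_\tau(X))\cdot w_Y^*\bigr]
\le b\cdot \mathbb{P}_{(X,Y)\sim P}[Y\notin C_\tau(X)],
\end{align*}
whenever $b$ is a valid uniform upper bound on the weights, i.e., $w_y^*\le b$ for all $y\in\mathcal{Y}$. Hence, if the standard PS algorithm run with the tightened level $\epsilon/b$ produces a source-covering threshold, the resulting prediction set will automatically have target miscoverage at most $\epsilon$.

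Next I would handle the randomness in $b$. Define the event $E_W$ that the Clopper-Pearson intervals returned by \textsc{CPInterval} all hold. By \eqref{lem:cp} (with the aggregate level $\delta_w$ distributed across the $K^2+K$ intervals) combined with Lemma~\ref{lem:gaussianelim} applied to the Gaussian elimination with intervals, on $E_W$ we have $w^*\in W$ componentwise, so in particular $b=\max_{k\in[K]}\overline{w}_k\ge\max_{k\in[K]}w_k^*$; moreover $\mathbb{P}[E_W]\ge1-\delta_w$. Condition on $E_W$; the value $b$ is now $\sigma(S_m,T_n^X)$-measurable and dominates all $w_y^*$.

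Now I would invoke the standard PAC prediction set guarantee of \citet{vovk2012conditional,park2019pac} described in Appendix~\ref{sec:k}. Since the \textsc{PS} subroutine is called with error level $\epsilon/b$ and failure level $\delta_C$, define the event $E_C$ that
\begin{align*}
\mathbb{P}_{(X,Y)\sim P}\bigl[Y\notin C_{\hat\tau}(X)\bigr]\le \epsilon/b,
\end{align*}
which, because $T_n^X$ is independent of $S_m$ and $b$ depends on $S_m,T_n^X$ only through the confusion/marginal statistics, satisfies $\mathbb{P}_{S_m\mid T_n^X}[E_C\mid E_W]\ge1-\delta_C$ by the standard guarantee applied conditionally on $b$. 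On $E_W\cap E_C$, combining with the inequality above yields target miscoverage at most $b\cdot(\epsilon/b)=\epsilon$. A union bound gives $\mathbb{P}[E_W\cap E_C]\ge 1-\delta_w-\delta_C$, which equals $1-\delta$ under the (implicit) calibration $\delta=\delta_w+\delta_C$ required by the PAC guarantee \eqref{eqn:labelpac}.

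The only subtle point, and the main thing to pin down carefully, is the conditional independence step: since $b$ is a function of $(S_m,T_n^X)$, the application of the classical PAC prediction set bound needs to treat $b$ as fixed. The cleanest way is to observe that \textsc{PS} depends on $S_m$ only through the losses $\mathbbm{1}(Y\notin C_\tau(X))$ at each threshold, and the concentration used in \eqref{eq:max} is uniform in the choice of the target level. Thus the conditional guarantee holds for any realized value of $b$, and integrating out yields the unconditional bound. Everything else (Clopper-Pearson validity, interval Gaussian elimination validity, the positivity side conditions from Appendix~\ref{sec:positivity}) is inherited directly from the earlier results, so no new technical machinery is needed.
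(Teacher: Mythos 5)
Your proposal is correct and takes essentially the same route as the paper's own proof: a change of measure to bound the target miscoverage by $b$ times the source miscoverage, followed by running the standard PS algorithm at the tightened level $\epsilon/b$ and a union bound over the weight-interval event and the PS failure event. You are in fact more explicit than the paper's one-paragraph argument about the split $\delta=\delta_w+\delta_C$ and about the subtlety that $b$ is a function of $(S_m,T_n^X)$ — a point the paper glosses over entirely — so no gap relative to the paper's own treatment.
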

\begin{proof}
Having constructed the importance weight intervals $w^*$, we can use $b = \max_{k\in[K]}\overline{w}_k$ to find a conservative upper bound on the risk as follows:
\begin{align}\small
&E_{(X,Y)\sim Q} [\mathbbm{1}(Y\notin C_{\tau}(X)] 
= \int q(x,y)\mathbbm{1}(y\notin C_{\tau}(x))dx dy\nonumber\\
&= \int p(x, y)w(y)\mathbbm{1}(y\notin C_{\tau}(x))dx dy 
\leq b E_{(X,Y)\sim P} [\mathbbm{1}(Y\notin C_{\tau}(X)].\label{eq:maxiw}
\end{align}
Hence, using the PS prediction set algorithm with parameters $(\ep/b,\delta)$, the output is $(\ep,\delta)$-PAC.
\end{proof}

\section{More Results}\label{sec:more}
\subsection{CDC Heart}
\begin{figure}[h]
\centering
\begin{subfigure}[b]{0.48\textwidth}
\includegraphics[height=0.41\linewidth]{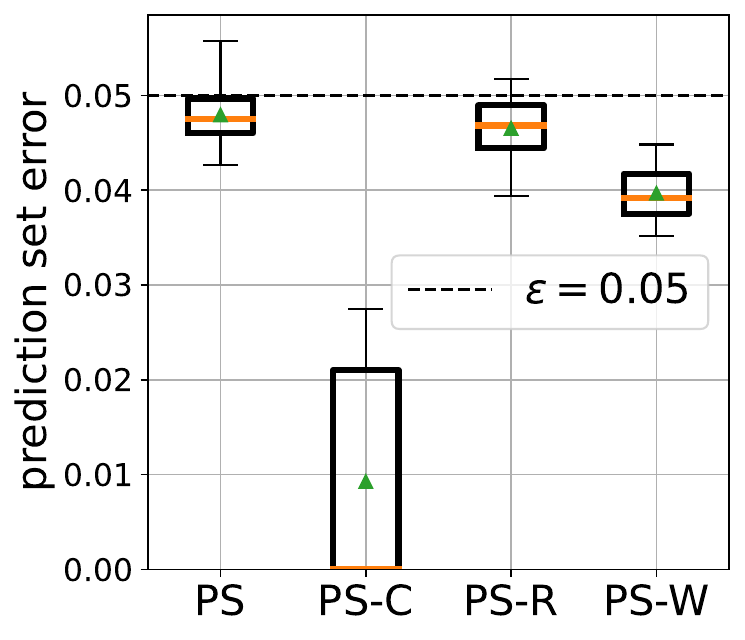}
\includegraphics[height=0.41\linewidth]{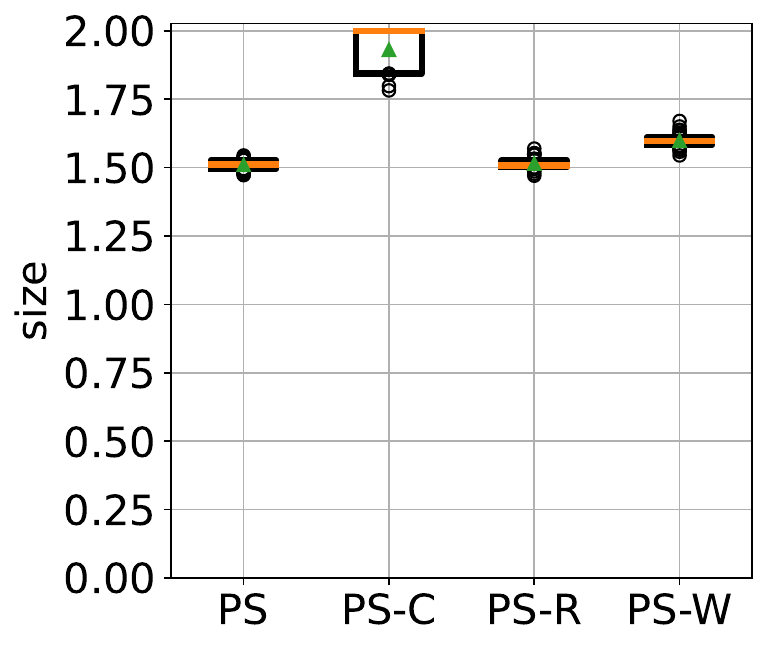}
\caption{Prediction set error and size under \textit{small} shifts on the CDC Heart dataset. Parameters are $\epsilon = 0.05$, $\delta = 5\times 10^{-4}$ , $m=42000$, $n=42000$, and $o= 9750$.}
\end{subfigure}\quad
\begin{subfigure}[b]{0.49\textwidth}
\centering
\includegraphics[height=0.41\linewidth]{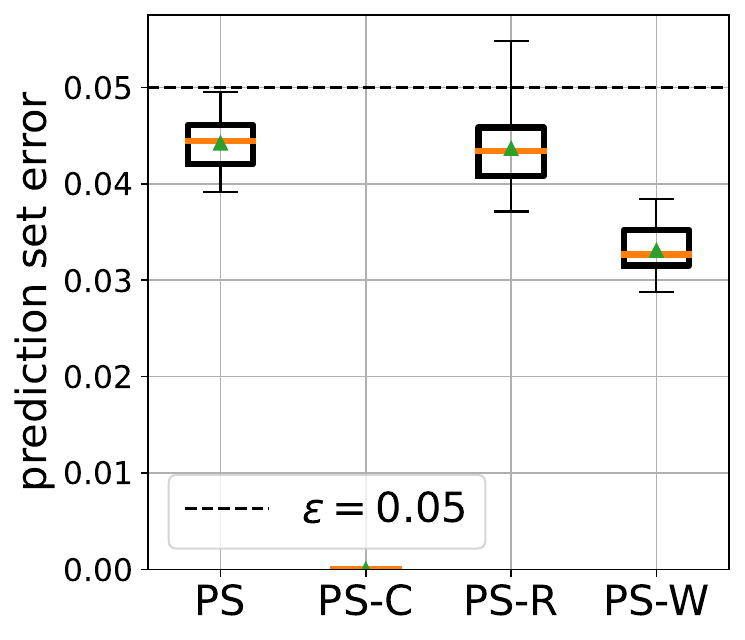}
\includegraphics[height=0.41\linewidth]{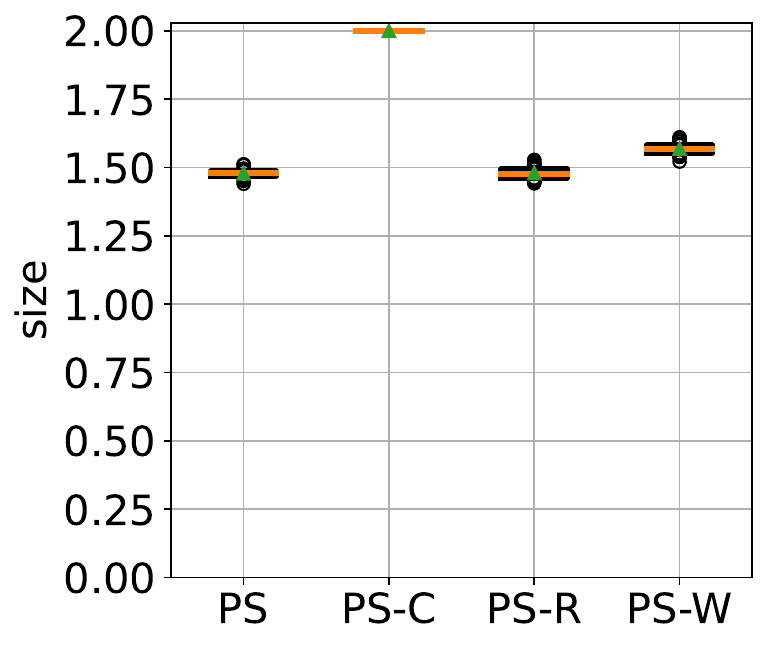}
\caption{Prediction set error and size under \textit{large} shifts on the CDC Heart dataset, Parameters are $\epsilon = 0.05$, $\delta = 5\times 10^{-4}$, $m=42000$, $n=42000$, and $o= 9750$.}
\end{subfigure}\vspace{2em}
\begin{subfigure}[b]{0.48\textwidth}
\includegraphics[height=0.41\linewidth]{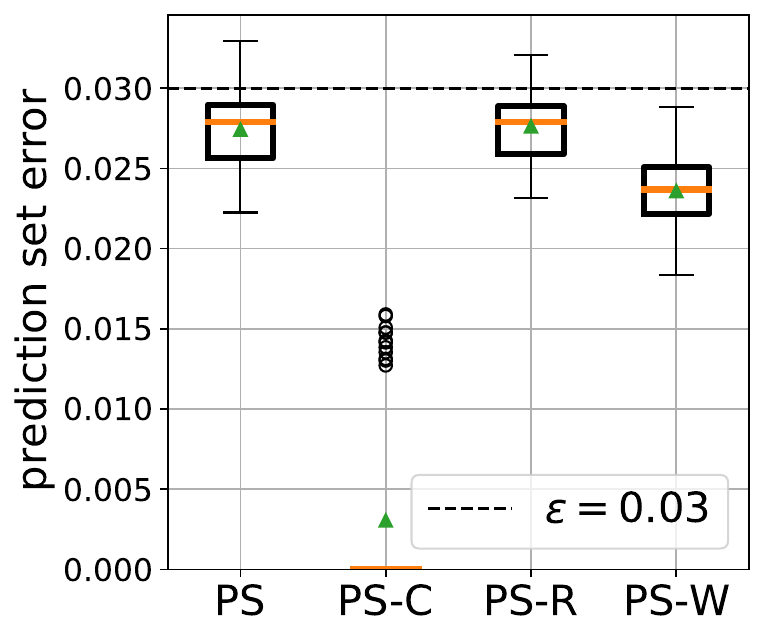}
\includegraphics[height=0.41\linewidth]{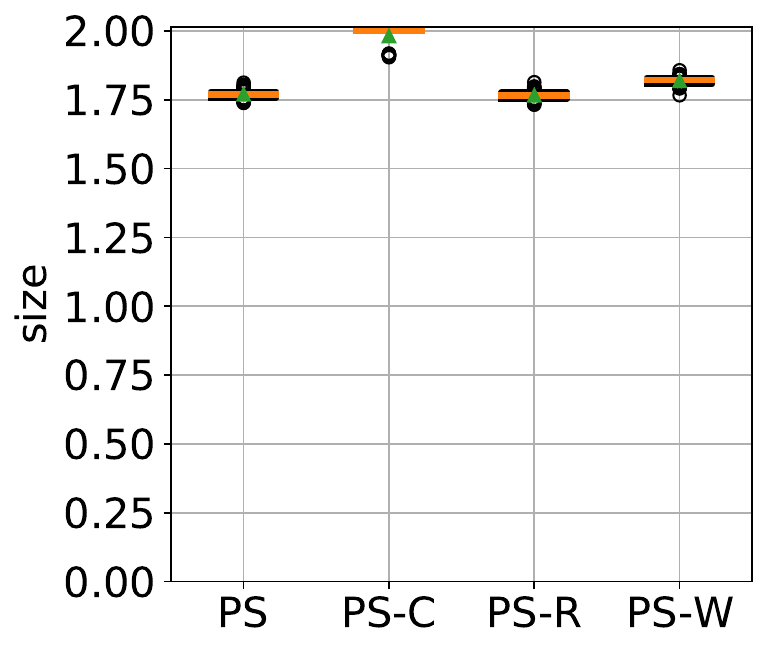}
\caption{Prediction set error and size under \textit{small} shifts on the CDC Heart dataset. Parameters are $\epsilon = 0.03$, $\delta = 5\times 10^{-4}$ , $m=42000$, $n=42000$, and $o= 9750$.}
\end{subfigure}\quad
\begin{subfigure}[b]{0.48\textwidth}
\centering
\includegraphics[height=0.41\linewidth]{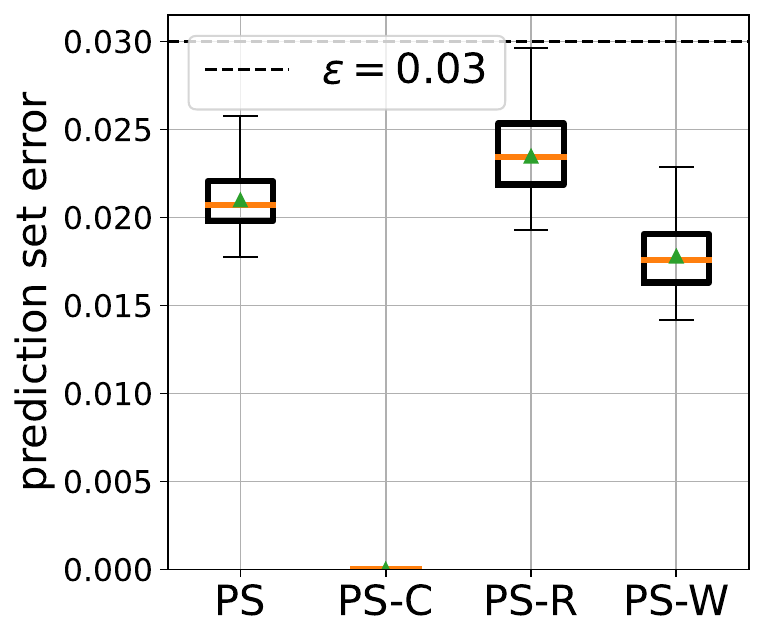}
\includegraphics[height=0.41\linewidth]{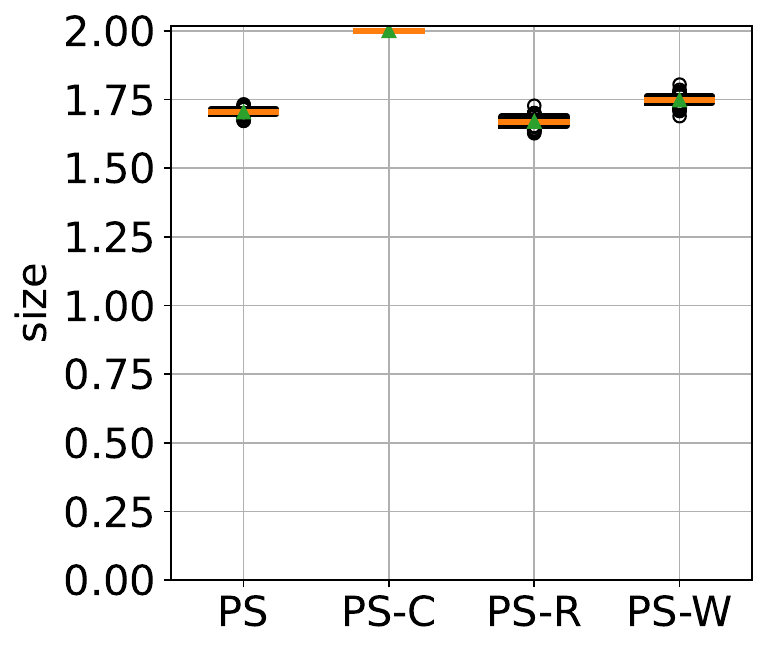}
\caption{Prediction set error and size under \textit{large} shifts on the CDC Heart dataset, Parameters are $\epsilon = 0.03$, $\delta = 5\times 10^{-4}$, $m=42000$, $n=42000$, and $o= 9750$.}
\end{subfigure}\vspace{2em}
\begin{subfigure}[b]{0.48\textwidth}
\includegraphics[height=0.41\linewidth]{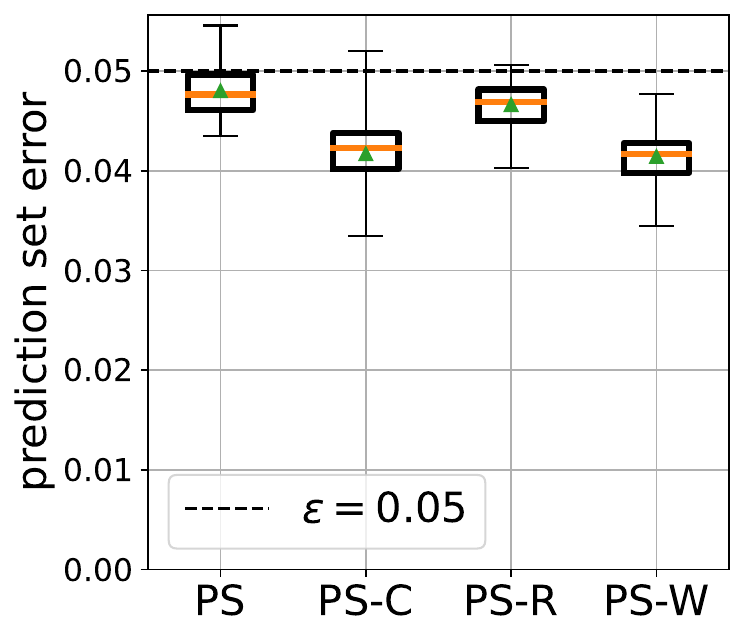}
\includegraphics[height=0.41\linewidth]{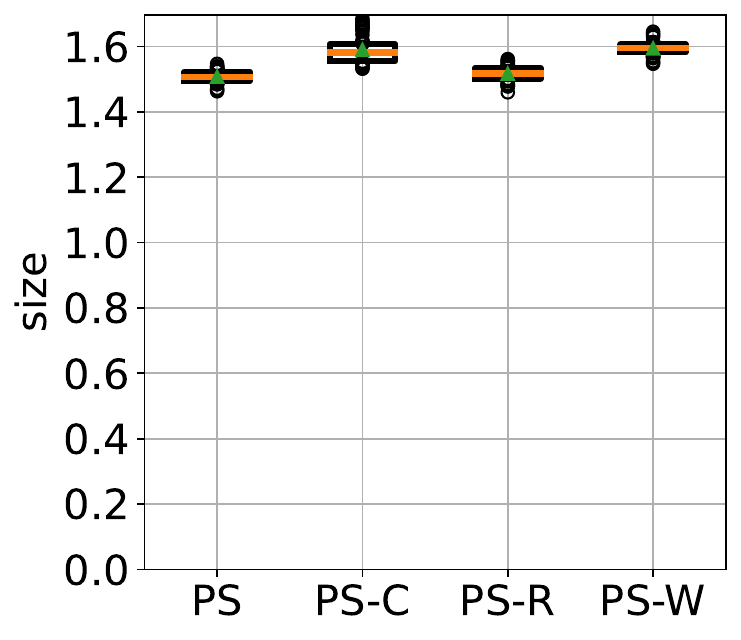}
\caption{Prediction set error and size under \textit{reversed small} shifts $((91.3\%, 8.7\%)\rightarrow(94\%, 6\%))$ on the CDC Heart dataset. Parameters are $\epsilon = 0.05$, $\delta = 5\times 10^{-4}$ , $m=42000$, $n=42000$, and $o= 9750$.}
\end{subfigure}
\begin{subfigure}[b]{0.48\textwidth}
\centering
\includegraphics[height=0.41\linewidth]{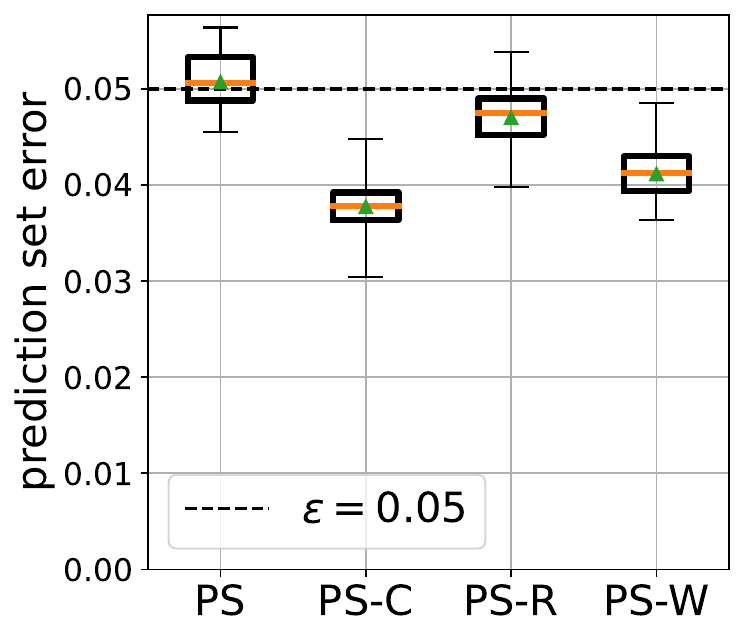}
\includegraphics[height=0.41\linewidth]{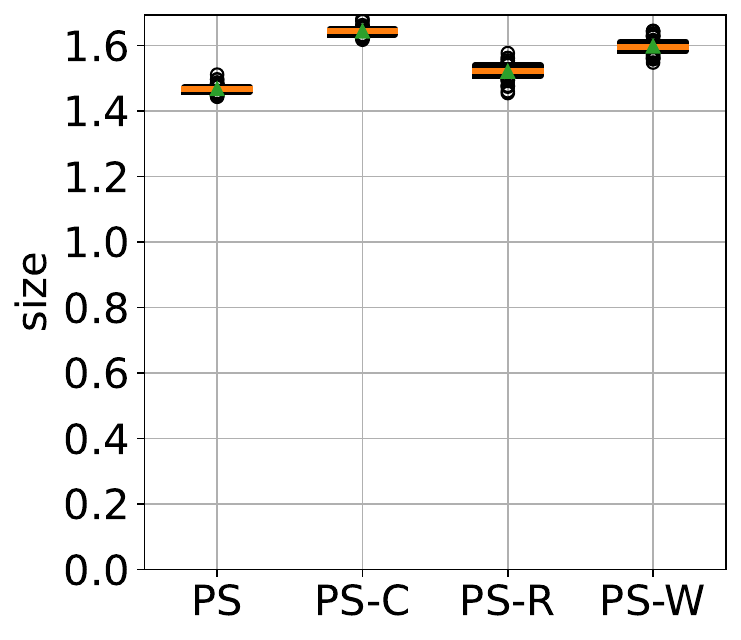}
\caption{Prediction set error and size under \textit{reversed large} shifts $((63.6\%, 36.4\%)\rightarrow(94\%, 6\%))$ on the CDC Heart dataset, Parameters are $\epsilon = 0.03$, $\delta = 5\times 10^{-4}$, $m=42000$, $n=42000$, and $o= 9750$.}
\end{subfigure}
\caption{More Prediction set results with different hyperparameters on the CDC Heart dataset.}
\end{figure}

\clearpage
\subsection{CIFAR-10}
\begin{figure}[h]
\centering
\begin{subfigure}[b]{0.48\textwidth}
\includegraphics[height=0.41\linewidth]{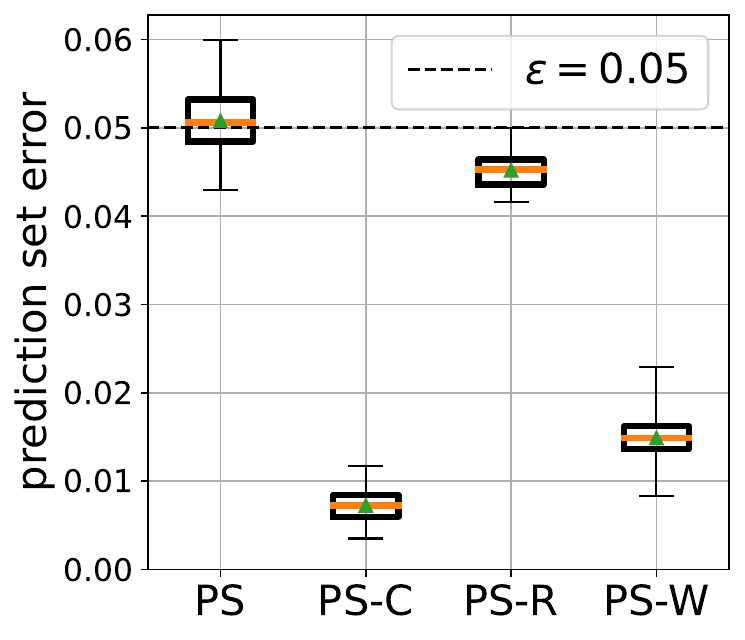}
\includegraphics[height=0.41\linewidth]{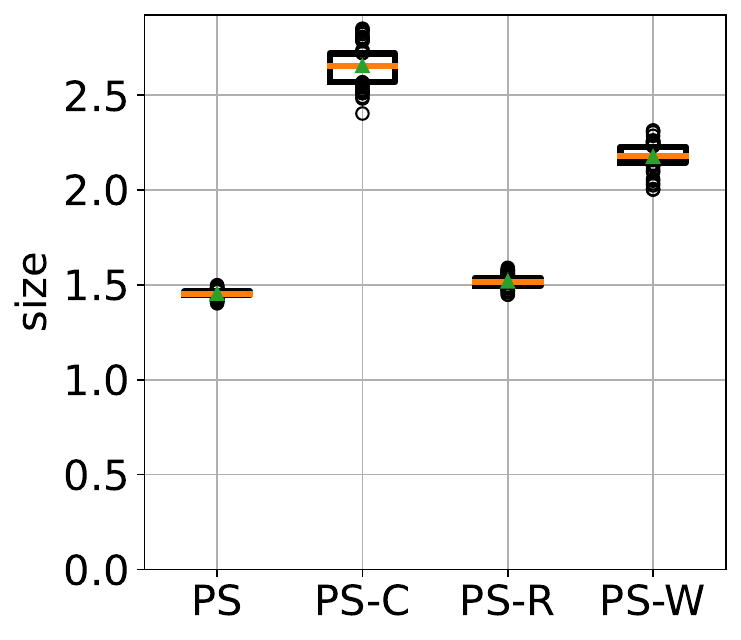}
\caption{Prediction set error and size with \textit{large} shifts on CIFAR-10. Parameters are $\epsilon = 0.05$, $\delta = 5\times 10^{-4}$, $m=27000$, $n=19997$, and $o=19997$.}
\end{subfigure}\quad
\begin{subfigure}[b]{0.49\textwidth}
\centering
\includegraphics[height=0.41\linewidth]{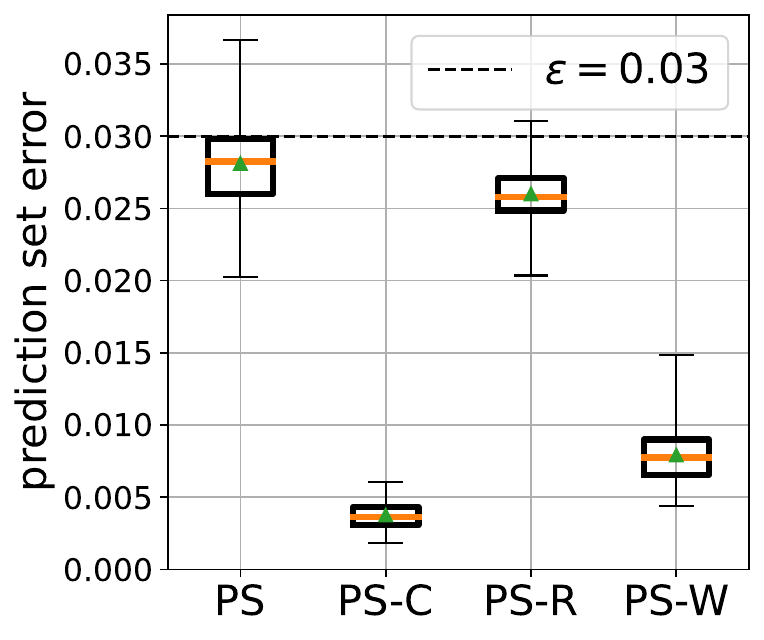}
\includegraphics[height=0.41\linewidth]{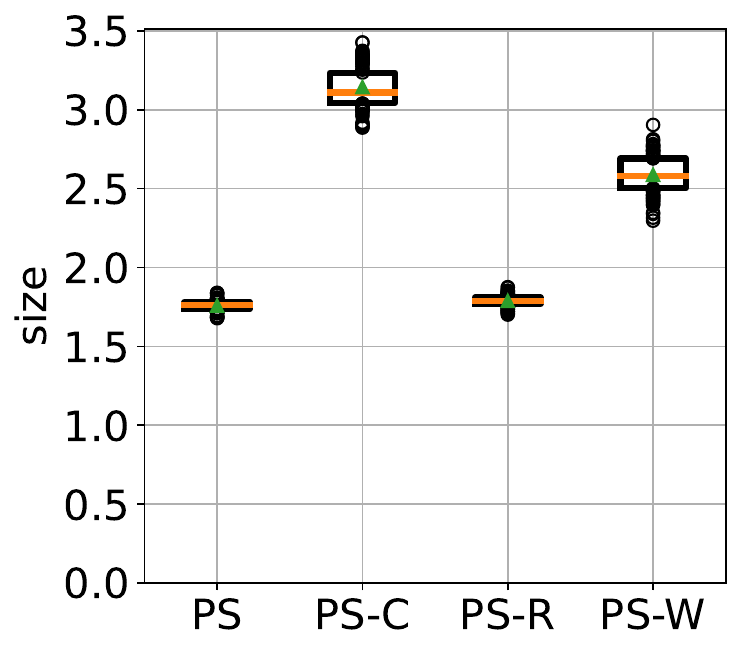}
\caption{Prediction set error and size with \textit{large} shifts on CIFAR-10. Parameters are $\epsilon = 0.03$, $\delta = 5\times 10^{-4}$, $m=27000$, $n=19997$, and $o=19997$.}
\end{subfigure}
\caption{More Prediction set results with different hyperparameters on the CIFAR-10 dataset.}
\end{figure}

\subsection{AGNews}
\begin{figure}[h]
\centering
\begin{subfigure}[b]{0.48\textwidth}
\includegraphics[height=0.42\linewidth]{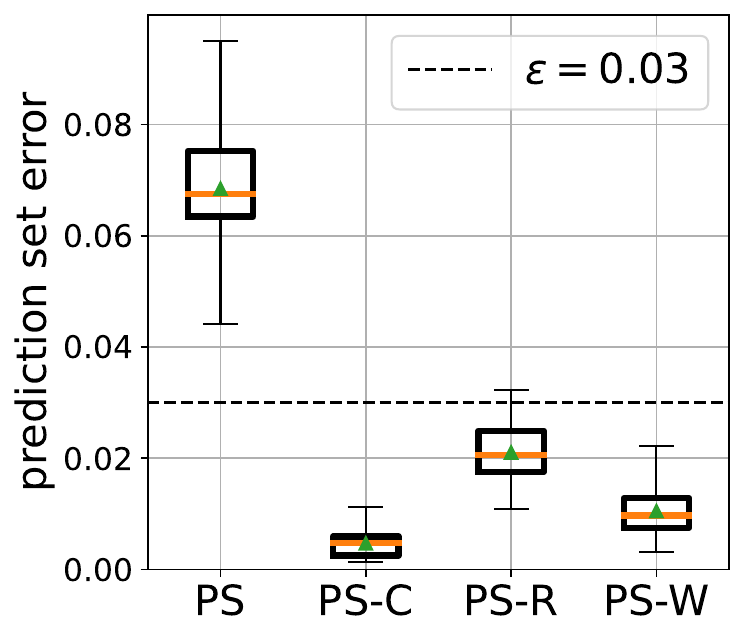}
\includegraphics[height=0.42\linewidth]{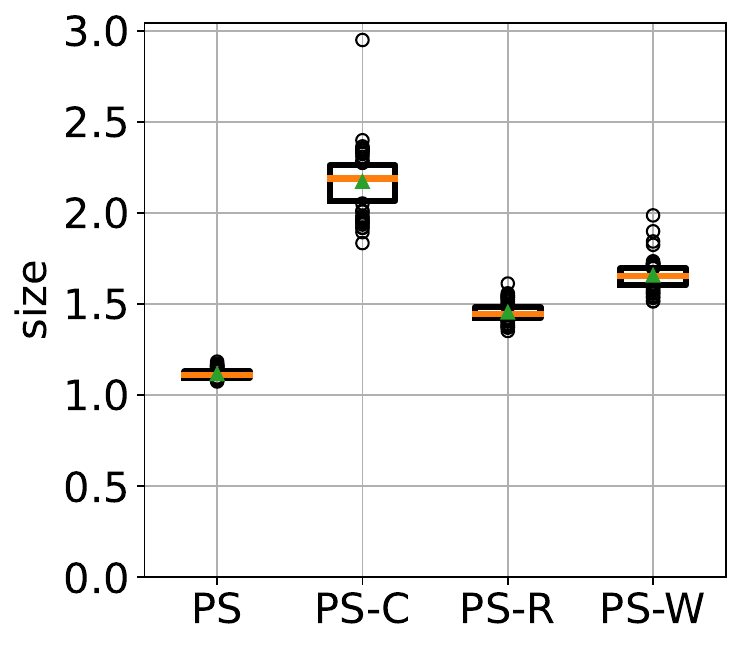}
\caption{Prediction set error and size on the AGNews Dataset. Parameters are $\epsilon = 0.03$, $\delta = 5\times 10^{-4}$, $m=26000$, $n=12800$, and $o=12800$.}
\end{subfigure}
\quad
\begin{subfigure}[b]{0.485\textwidth}
\includegraphics[height=0.41\linewidth]{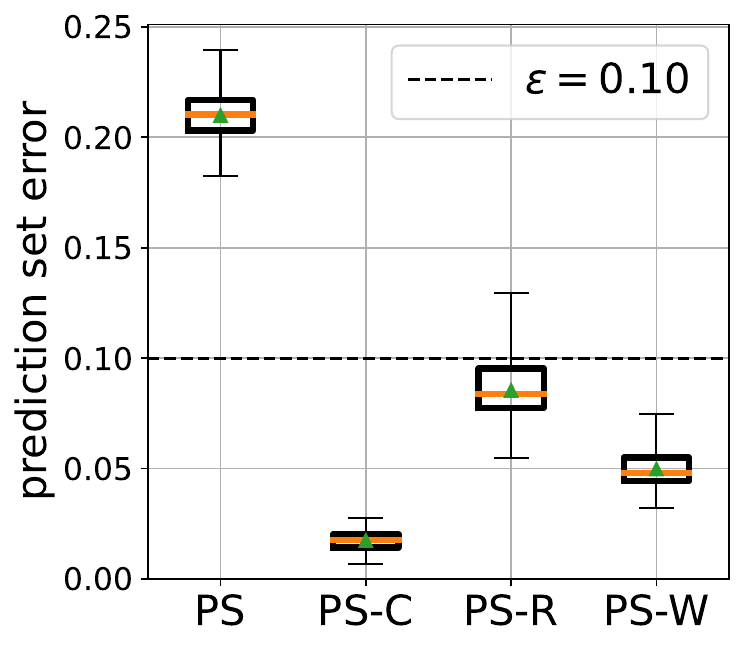}
\includegraphics[height=0.41\linewidth]{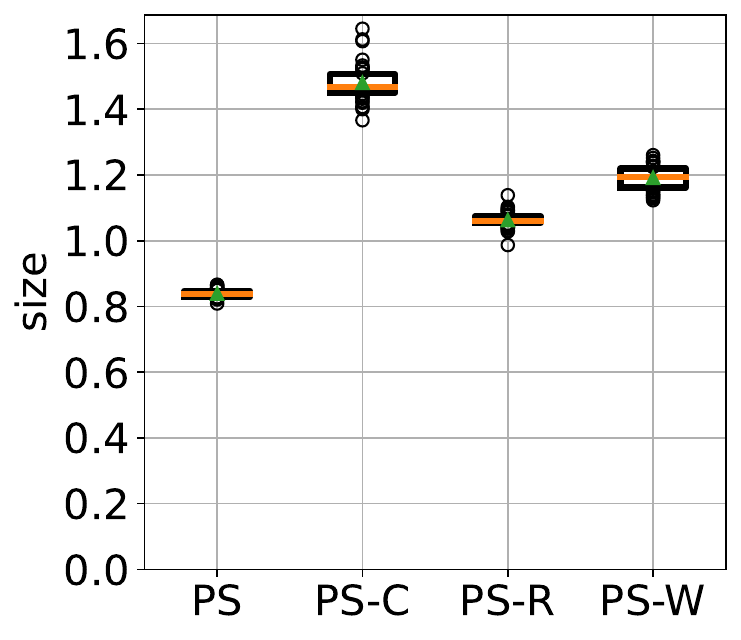}
\caption{Prediction set error and size on the AGNews Dataset. Parameters are $\epsilon = 0.1$, $\delta = 5\times 10^{-4}$, $m=26000$, $n=12800$, and $o=12800$.}
\end{subfigure}
\caption{More Prediction set results with different hyperparameters on the AGNews dataset.}
\end{figure}

\subsection{Entity-13}
\begin{figure}[h]
\centering
\begin{subfigure}[b]{0.48\textwidth}
\includegraphics[height=0.41\linewidth]{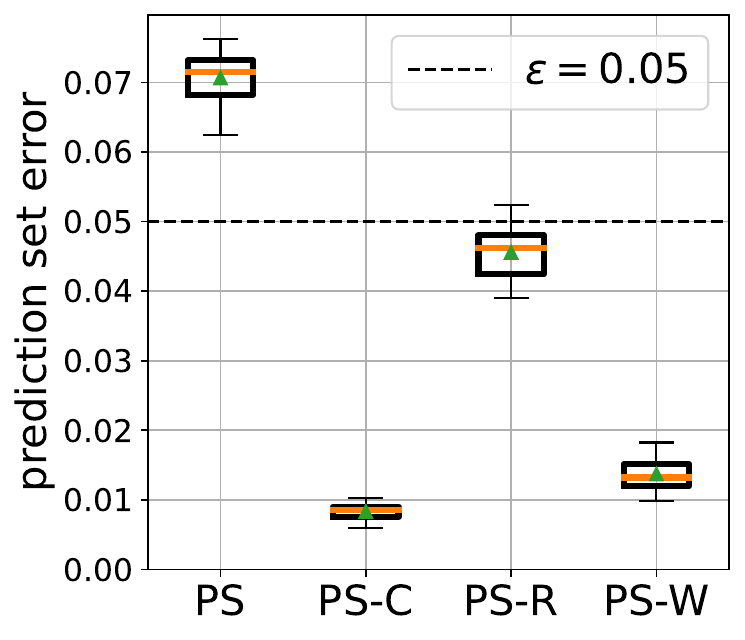}
\includegraphics[height=0.41\linewidth]{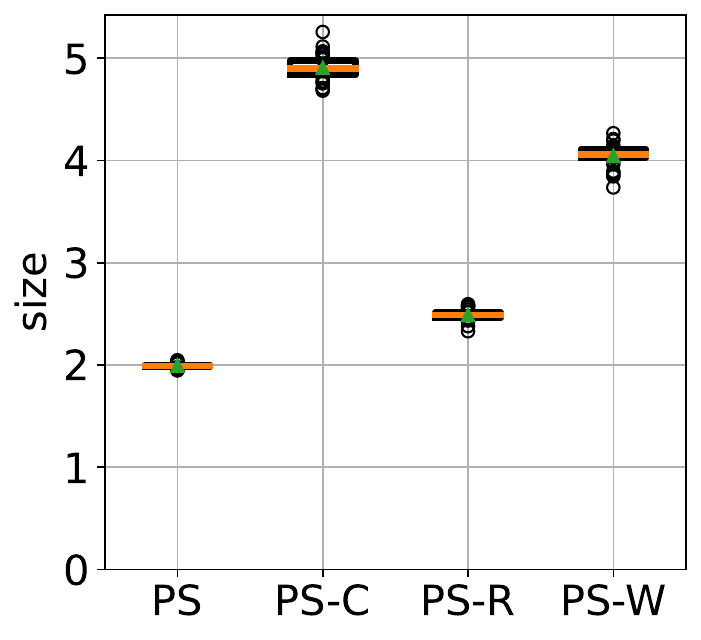}
\caption{Prediction set error and size on the Entity-13 dataset. Parameters are $\epsilon = 0.05$, $\delta = 5\times 10^{-4}$, $m=52000$, $n=21000$, and $o=4667$.}
\end{subfigure}
\quad
\begin{subfigure}[b]{0.48\textwidth}
\includegraphics[height=0.41\linewidth]{figures/entity13/entity0.1/plot_error_rnd_n_50000_eps_0.100000_delta_0.000500.pdf}
\includegraphics[height=0.41\linewidth]{figures/entity13/entity0.1/plot_size_rnd_n_50000_eps_0.100000_delta_0.000500.pdf}
\caption{Prediction set error and size on the Entity-13 dataset. Parameters are $\epsilon = 0.1$, $\delta = 5\times 10^{-4}$, $m=52000$, $n=21000$, and $o=4667$.}
\end{subfigure}
\begin{subfigure}[b]{0.48\textwidth}
\includegraphics[height=0.41\linewidth]{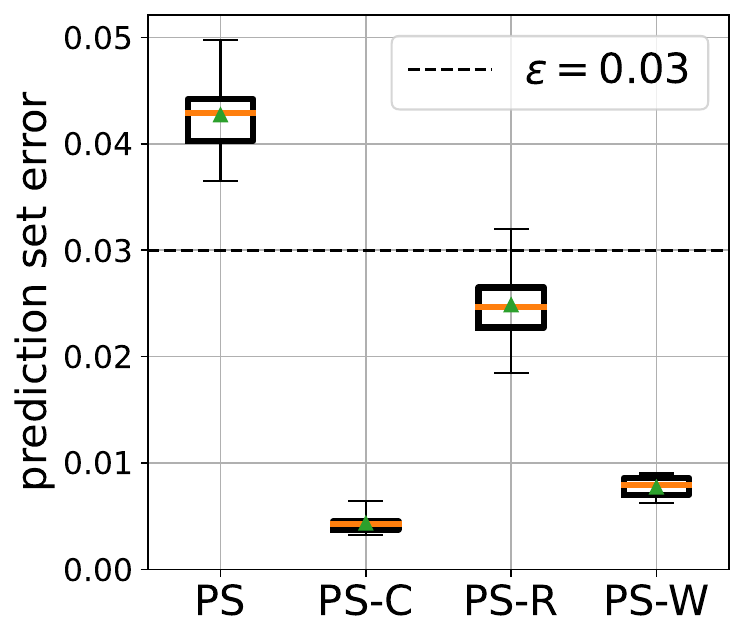}
\includegraphics[height=0.41\linewidth]{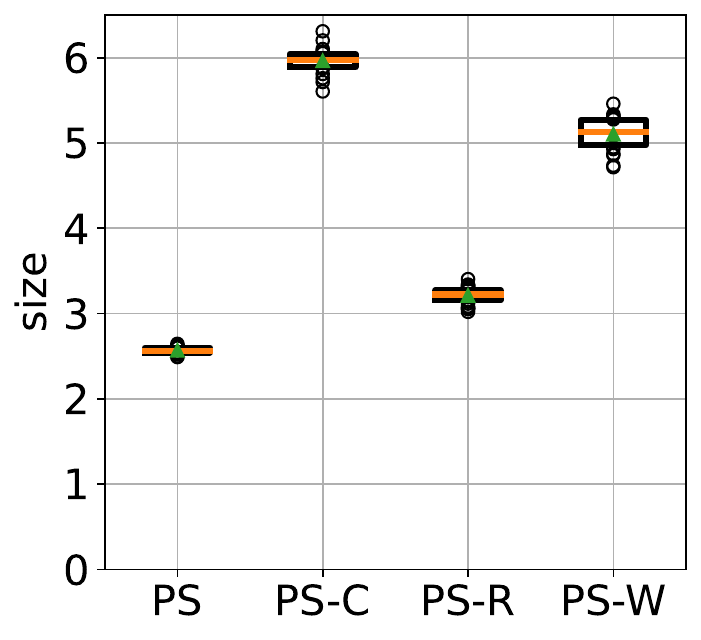}
\caption{Prediction set error and size on the Entity-13 dataset. Parameters are $\epsilon = 0.03$, $\delta = 5\times 10^{-4}$, $m=52000$, $n=21000$, and $o=4667$.}
\end{subfigure}
\caption{More Prediction set results with different hyperparameters on the Entity-13 dataset.}\label{fig:entity13}
\end{figure}

\subsection{ChestX-ray}
\begin{figure}[h]
\centering
\begin{subfigure}[b]{0.485\textwidth}
\includegraphics[height=0.4\linewidth]{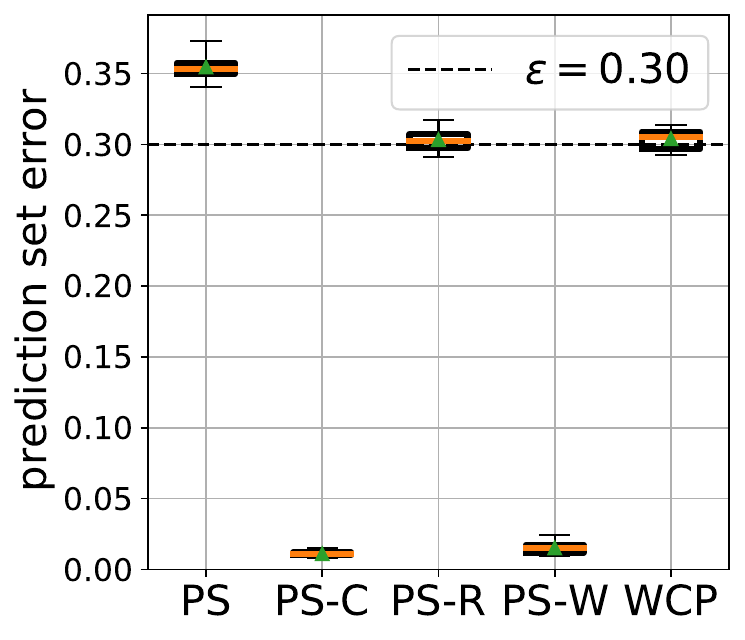}
\includegraphics[height=0.4\linewidth]{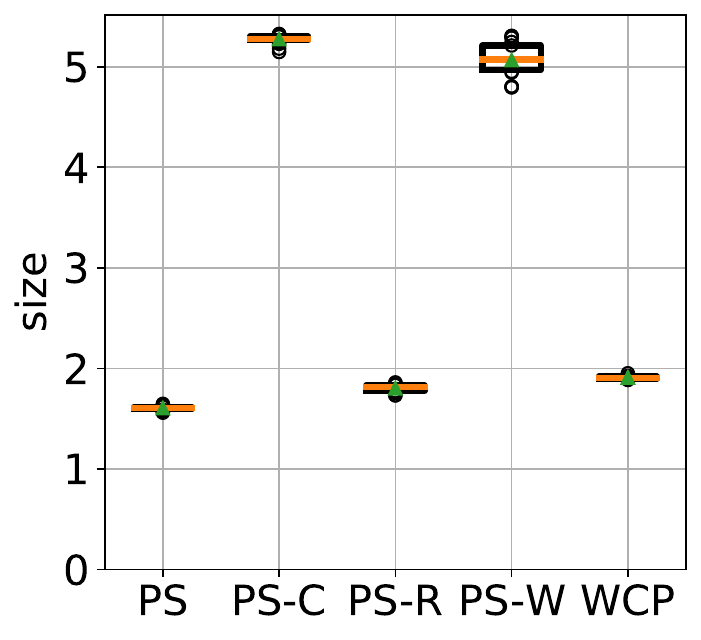}
\caption{Prediction set error and size on the ChestX-ray dataset. Parameters are $\epsilon = 0.3$, $\delta = 5\times 10^{-4}$, $m=33600$, $n=17600$, and $o=3520$.}
\end{subfigure}
\begin{subfigure}[b]{0.485\textwidth}
\includegraphics[height=0.4\linewidth]{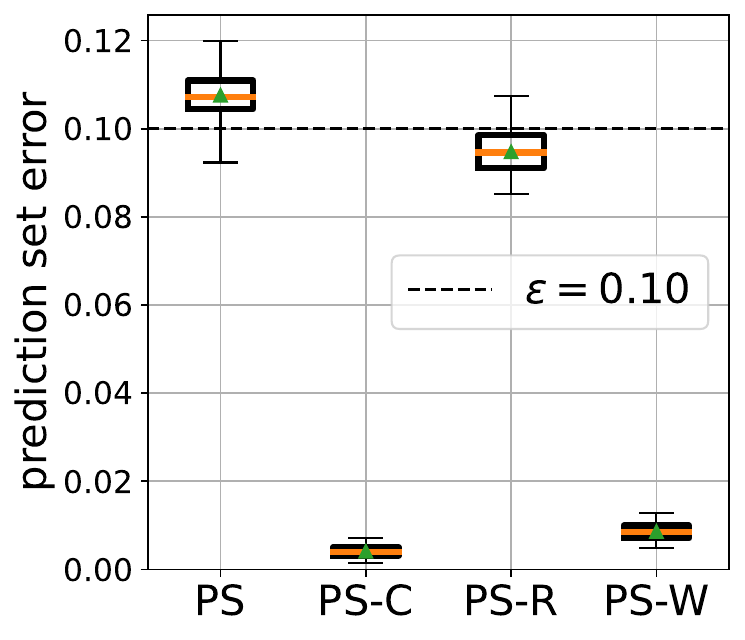}
\includegraphics[height=0.4\linewidth]{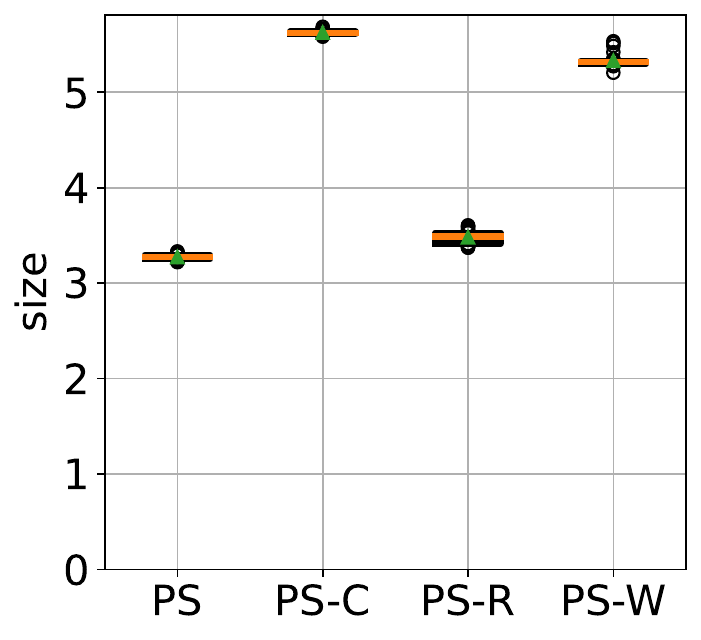}
\caption{Prediction set error and size on the ChestX-ray dataset. Parameters are $\epsilon = 0.1$, $\delta = 5\times 10^{-4}$, $m=67200$, $n=35200$, and $o=3520$.}
\end{subfigure}
\caption{More Prediction set results with different hyperparameters on the ChestX-ray dataset.}\label{fig:entity13}
\end{figure}

\subsection{Additional Baselines}\label{sec:morebase}
Class-conditional conformal predictors fit separate thresholds for each label and demonstrate robustness to label shift.
In Figure~\ref{fig:label-wise}, we show the class-conditional results for both conformal prediction tuned for average coverage and our PAC prediction set, on the CDC dataset.
Here, LWCP is a baseline from the label-conditional setting from 
\cite{sadinle2019least}, which does not satisfy a PAC guarantee. PS-LW adapts the standard PAC prediction set algorithm \cite{vovk2012conditional,park2020pac} 
to the label-conditional setting; our approach is PS-W. 
Most relevantly, while PS-LW approximately satisfies the desired error guarantee, it is more conservative than our approach (PS-W) and produces prediction sets that are larger on average. Intuitively, it satisfies a stronger guarantee than necessary
for our setting, leading it to be overly conservative.

\begin{figure}[h]
\centering\small
\includegraphics[height=0.35\linewidth]{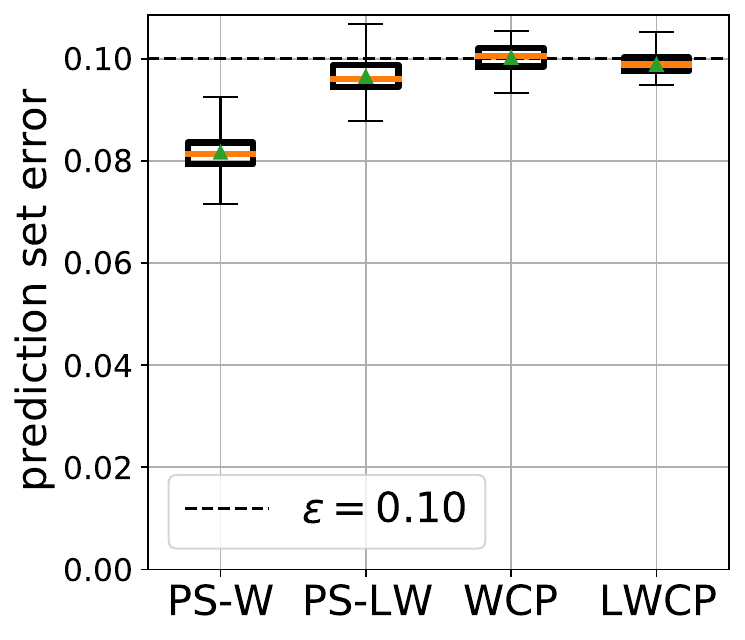}
\includegraphics[height=0.35\linewidth]{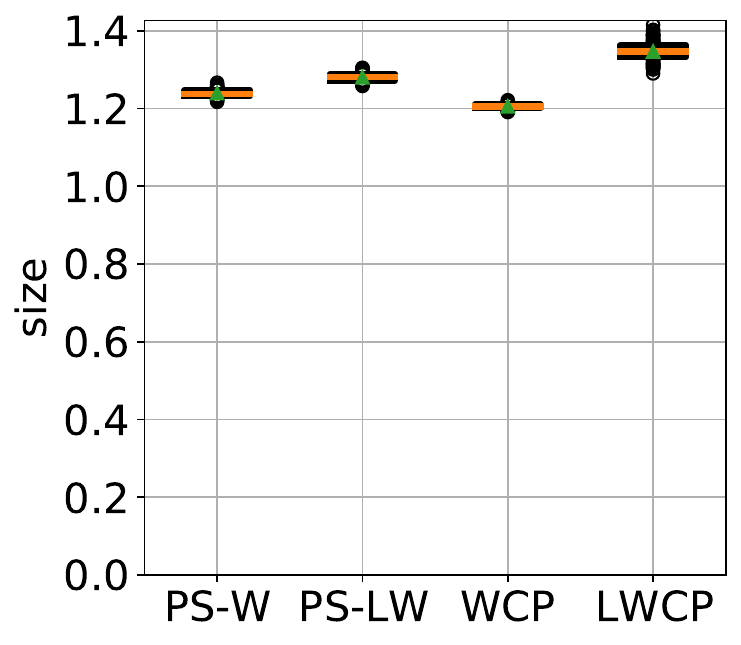}
\caption{Prediction set error and size under \textit{small} shifts on the CDC dataset. Parameters are $\epsilon = 0.1$, $\delta = 5\times 10^{-4}$, $m=42000$, $n=42000$, and $o= 9750$.}\label{fig:label-wise}
\end{figure}

Empirically, we find that while APS improves coverage in the label shift setting, it does not satisfy our desired PAC guarantee. In particular, we show results for the APS scoring function with vanilla prediction sets in Figure~\ref{fig:arc}; as can be seen, it does not satisfy the desired coverage guarantee. Due to its unusual structure, it is not clear how APS can be adapted to the PAC setting, which is our focus.

\begin{figure}[t]
\centering\small
\includegraphics[height=0.35\linewidth]{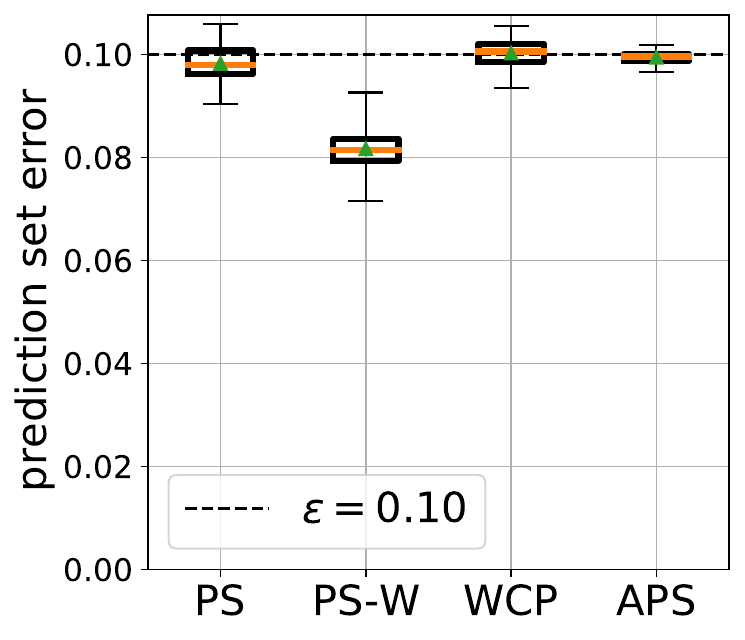}
\includegraphics[height=0.35\linewidth]{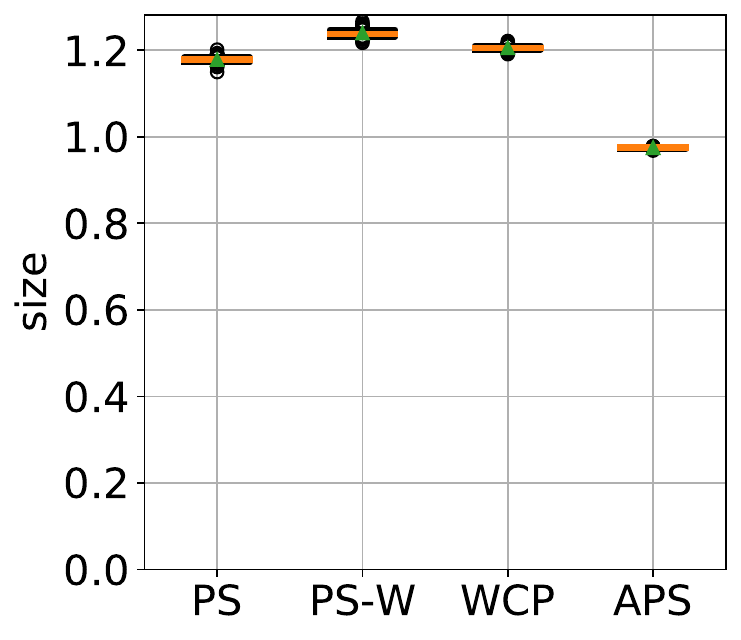}
\caption{Prediction set error and size under \textit{small} shifts on CDC dataset. Parameters are $\epsilon = 0.1$, $\delta = 5\times 10^{-4}$, $m=42000$, $n=42000$, and $o= 9750$.}\label{fig:arc}
\end{figure}

\end{document}